\newcommand{\mymacro}[1]{{#1}}
\newcommand{\defn}[1]{\textbf{#1}}
\newcommand{\pdens}{{\mymacro{ p}}}
\newcommand{\qdens}{{\mymacro{ q}}}
\newcommand{\precision}{{\mymacro{\psi}}}
\newcommand{\precisionFun}[1]{{\precision\left(#1\right)}}
\newcommand{\ind}[1]{\mathbbm{1} \left\{ #1 \right\}}
\newcommand{\Q}{{\mymacro{ \mathbb{Q}}}}
\newcommand{\R}{{\mymacro{ \mathbb{R}}}}
\newcommand{\Rex}{{\mymacro{ \overline{\R}}}}
\newcommand{\Qnonnegative}{{\mymacro{ \mathbb{Q}_{\geq 0}}}}
\newcommand{\vfunc}{{\mymacro{ \boldsymbol{f}}}}
\newcommand{\abs}[1]{{\mymacro{ \left| #1 \right|}}}
\newcommand{\norm}[1]{{\mymacro{ \left\lVert #1 \right\rVert}}}
\newcommand{\ordering}{{\mymacro{ n}}}
\newcommand{\alphabet}{{\mymacro{ \Sigma}}}
\newcommand{\eosalphabet}{{\mymacro{ \overline{\alphabet}}}}
\newcommand{\kleene}[1]{{\mymacro{#1^*}}}
\newcommand{\str}{{\mymacro{\boldsymbol{y}}}}
\newcommand{\strlt}{{\mymacro{ \str_{<\tstep}}}}
\newcommand{\strlet}{{\mymacro{ \str_{\leq\tstep}}}}
\newcommand{\strlen}{{\mymacro{T}}}
\newcommand{\sym}{{\mymacro{y}}}
\newcommand{\eossym}{{\mymacro{\overline{\sym}}}}
\newcommand{\syma}{{\mymacro{\texttt{a}}}}
\newcommand{\symb}{{\mymacro{\texttt{b}}}}
\newcommand{\symz}{{\mymacro{\texttt{z}}}}
\newcommand{\defeq}{\mathrel{\stackrel{\textnormal{\tiny def}}{=}}}
\newcommand{\NTo}[1]{{\mymacro{\left[ #1 \right]}}}
\newcommand{\set}[1]{{\mymacro{\left\{ #1 \right\}}}}
\newcommand{\idx}{{\mymacro{ n}}}
\newcommand{\idxn}{{\mymacro{ n}}}
\newcommand{\idxd}{{\mymacro{ d}}}
\newcommand{\idxi}{{\mymacro{ i}}}
\newcommand{\idxj}{{\mymacro{ j}}}
\newcommand{\setsize}{{\mymacro{ N}}}
\newcommand{\nstates}{{\mymacro{ |\states|}}}
\newcommand{\nsymbols}{{\mymacro{ |\alphabet|}}}
\newcommand{\eosnsymbols}{{\mymacro{ |\eosalphabet|}}}
\newcommand{\tstep}{{\mymacro{ t}}}
\newcommand{\pLM}{\mymacro{\pdens}}
\newcommand{\pLMA}{\mymacro{\pdens}_\automaton}
\newcommand{\pLMAFun}[2]{\mymacro{{\pdens}_\automaton\left(#1\mid#2\right)}}
\newcommand{\pLMR}{\mymacro{\pdens}_\rnn}
\newcommand{\pLMRFun}[2]{\mymacro{{\pdens}_\rnn\left(#1\mid#2\right)}}
\newcommand{\qLM}{\mymacro{\qdens}}
\newcommand{\pLNSM}{\mymacro{\pdens}}
\newcommand{\pLN}{\mymacro{\pdens}}
\newcommand{\eos}{{\mymacro{\textsc{eos}}}}
\newcommand{\embedDim}{{\mymacro{ R}}}
\newcommand{\onehot}[1]{{\mymacro{ \llbracket#1\rrbracket}}}
\newcommand{\inEmbedding}{{\mymacro{ \vr}}}
\newcommand{\inEmbeddingFun}[2][]{{\mymacro{ \inEmbedding\!\left(#2\right)}}}
\newcommand{\inEmbedSym}{{\mymacro{ \inEmbeddingFun{\sym}}}}
\newcommand{\inEmbedSymt}{{\mymacro{ \inEmbeddingFun{\sym_\tstep}}}}
\newcommand{\symt}{{\mymacro{ \sym_{\tstep}}}}
\newcommand{\symtminus}{{\mymacro{ \sym_{\tstep-1}}}}
\newcommand{\symoverline}{{\mymacro{\overline{\sym}}}}
\newcommand{\bias}{{\mymacro{ \vb}}}
\newcommand{\biasVech}{{\mymacro{ \vb}}}
\newcommand{\one}{{\mymacro{\mathbf{1}}}}
\newcommand{\automaton}{{\mymacro{ \mathcal{A}}}}
\newcommand{\wfsa}{{\mymacro{ \automaton}}}
\newcommand{\stateq}{{\mymacro{ q}}}
\newcommand{\states}{{\mymacro{ Q}}}
\newcommand{\trans}{{\mymacro{ \tau}}}
\newcommand{\prevq}{{\mymacro{ \varphi }}}
\newcommand{\weight}{{\mymacro{ \textnormal{w}}}}
\newcommand{\weightv}{{\mymacro{ w }}}
\newcommand{\prefixweight}{{\mymacro{\widetilde\weight}}}
\newcommand{\apath}{{\mymacro{ \boldsymbol p}}}
\newcommand{\pathlen}{{\mymacro{ N}}}
\newcommand{\paths}{{\mymacro{ P}}}
\newcommand{\initf}{{\mymacro{ \lambda}}}
\newcommand{\finalf}{{\mymacro{ \rho}}}
\newcommand{\initfFun}[1]{{\mymacro{\initf\left(#1\right)}}}
\newcommand{\finalfFun}[1]{{\mymacro{\finalf\left(#1\right)}}}
\newcommand{\wfsatuple}{{\mymacro{ \left( \alphabet, \states, \trans, \initf, \finalf \right)}}}
\newcommand{\edge}[4]{{\mymacro{#1 \xrightarrow{#2 / #3} #4}}}
\newcommand{\yield}{{\mymacro{\textbf{s}}}}
\newcommand{\elmanrnntuple}{{\mymacro{ \left(\Q^\hiddDim, \alphabet, \sigmoid, \recMtx, \inMtx, \biasVech, \initstate\right)}}}
\newcommand{\rnn}{{\mymacro{ \mathcal{R}}}}
\newcommand{\ernnAcr}{{\mymacro{ERNN}}\xspace}
\newcommand{\aernnAcr}{{\mymacro{non-linear output Elman LM}}\xspace}
\newcommand{\dornnAcr}{{\mymacro{deep output Elman LM}}\xspace}
\newcommand{\pfsaAcr}{{\mymacro{PFSA}}\xspace}
\newcommand{\deltapfsaAcr}{{\mymacro{$\delta$-perturbed \pfsaAcr}}\xspace}
\newcommand{\fslmAcr}{{\mymacro{regular LM}}\xspace}
\newcommand{\recMtx}{{\mymacro{ \mU}}}
\newcommand{\inMtx}{{\mymacro{ \mV}}}
\newcommand{\outMtx}{{\mymacro{ \mE}}}
\newcommand{\hiddDim}{{\mymacro{ D}}}
\newcommand{\hiddState}{{\mymacro{ \vh}}}
\newcommand{\hiddStatet}{{\mymacro{ \hiddState_\tstep}}}
\newcommand{\hiddStatetminus}{{\mymacro{ \hiddState_{\tstep - 1}}}}
\newcommand{\hiddStatetzero}{{\mymacro{ \hiddState_{0}}}}
\newcommand{\initstate}{{\mymacro{\boldsymbol{\eta}}}}
\newcommand{\vhzero}{{\mymacro{ \vh_0}}}
\newcommand{\hiddStateZero}{{\mymacro{ \vhzero}}}
\newcommand{\vht}{{\mymacro{ \vh_t}}}
\newcommand{\vhtminus}{{\mymacro{ \vh_{t-1}}}}
\newcommand{\statedistributionFun}[2]{\pLMAFun{#1}{#2}}
\newcommand{\mlp}{{\mymacro{\mathbf{F}}}}
\newcommand{\mlpFun}[1]{\mlp\left(#1\right)}
\newcommand{\Simplexnminus}{{\mymacro{ \boldsymbol{\Delta}^{N-1}}}}
\newcommand{\SimplexEosalphabetminus}{{\mymacro{ \boldsymbol{\Delta}^{|\eosalphabet|-1}}}}
\DeclareMathSymbol{\mlq}{\mathord}{operators}{``} 
\DeclareMathSymbol{\mrq}{\mathord}{operators}{`'} 
\newcommand{\negterm}[1]{{\mymacro{ {\raise.17ex\hbox{$\scriptstyle\sim$}} #1}}}
\newcommand{\ifcondition}{\textbf{if }}
\newcommand{\ifcond}{\ifcondition}
\newcommand{\otherwisecondition}{\textbf{otherwise }}
\newcommand{\ignore}[1]{}
\newcommand{\expandLater}[1]{}
\def\1{\mathbf{1}}
\def\eps{{\mymacro{ \varepsilon}}}
\def\rvz{{{\mymacro{ \mathbf{z}}}}}
\def\vb{{{\mymacro{ \mathbf{b}}}}}
\def\vd{{{\mymacro{ \mathbf{d}}}}}
\def\vh{{{\mymacro{ \mathbf{h}}}}}
\def\vr{{{\mymacro{ \mathbf{r}}}}}
\def\vv{{{\mymacro{ \mathbf{v}}}}}
\def\vx{{{\mymacro{ \mathbf{x}}}}}
\def\vy{{{\mymacro{ \mathbf{y}}}}}
\def\evx{{{\mymacro{ x}}}}
\def\mE{{{\mymacro{ \mathbf{E}}}}}
\def\mT{{{\mymacro{ \mathbf{T}}}}}
\def\mU{{{\mymacro{ \mathbf{U}}}}}
\def\mV{{{\mymacro{ \mathbf{V}}}}}
\def\mW{{{\mymacro{ \mathbf{W}}}}}
\def\mX{{{\mymacro{ \mathbf{X}}}}}
\def\mY{{{\mymacro{ \mathbf{Y}}}}}
\def\sK{{{\mymacro{ \mathcal{K}}}}}
\newcommand{\N}{{\mymacro{ \mathbb{N}}}}
\newcommand{\Nzero}{{\mymacro{ \mathbb{N}_{\geq 0}}}}
\newcommand{\projfunc}{{\mymacro{\boldsymbol{\pi}}}}
\newcommand{\projfuncEosalphabetminus}{{\mymacro{\projfunc}}}
\newcommand{\projfuncEosalphabetminusFunc}[1]{{\mymacro{\projfunc\left(#1\right)}}}
\newcommand{\softmax}{{\mymacro{ \mathrm{softmax}}}}
\newcommand{\softmaxshort}{{\mymacro{\boldsymbol{\sigma}}}}
\newcommand{\sparsemax}{{\mymacro{ \mathrm{sparsemax}}}}
\newcommand{\ReLU}{{\mymacro{ \mathrm{ReLU}}}}
\newcommand{\softmaxfunc}[2]{{\mymacro{ \softmaxshort\!\left(#1\right)_{#2}}}} 
\newcommand{\sparsemaxfunc}[2]{{\mymacro{ \mathrm{sparsemax}\!\left(#1\right)_{#2}}}} 
\newcommand{\heaviside}{{\mymacro{ H}}}
\newcommand{\sigmoid}{{\mymacro{\sigma}}}
\DeclareMathOperator*{\argmax}{{\mymacro{ argmax}}}
\DeclareMathOperator*{\argmin}{{\mymacro{ argmin}}}
\newcommand{\bigO}[1]{{\mymacro{ \mathcal{O}\left(#1\right)}}}
\newcommand{\entropy}[1]{{\mymacro{\mathrm{H}(#1)}}}
\newcommand{\family}{\mymacro{\mathcal{F}}}
\newcommand{\tvdacr}{\mymacro{\textsc{tvd}}}
\newcommand{\tvd}[2]{\mymacro{\tvdacr(#1,#2)}}
\newcommand{\rtvd}[3]{\mymacro{\tvdacr_#1(#2,#3)}}
\newcommand{\automatondelta}{\mymacro{\automaton_\delta}}
\newcommand{\automatoneta}{\mymacro{\automaton_\eta}}
\newcommand{\pLMdelta}{\mymacro{\pLM_{\automatondelta}}}
\newcommand{\pLMdeltaN}[1]{\mymacro{\pLM_{\automaton_{\delta_{#1}}}}}
\newcommand{\pLMeta}{\mymacro{\pLM_{\automatoneta}}}
\newcommand{\invtemp}{\mymacro{\lambda}}
\newcommand{\activation}{\mymacro{\boldsymbol{\alpha}}}
\newcommand{\mlpActivation}{\mymacro{\boldsymbol{\beta}}}
\newcommand{\fstr}{\mymacro{f_\str}}
\title{Lower Bounds on the Expressivity of Recurrent Neural Language Models}
\author{
Anej Svete\thanks{Equal contribution.}%
~\;~\;~
Franz Nowak\footnotemark[1]%
~\;~\;~
Anisha Mohamed Sahabdeen%
~\;~\;~Ryan Cotterell\\
\texttt{\{\href{mailto:asvete@ethz.ch}{asvete}, \href{mailto:fnowak@ethz.ch}{fnowak},
\href{mailto:amohame@ethz.ch}{amohame}, \href{mailto:ryan.cotterell@ethz.ch}{ryan.cotterell}\}@ethz.ch}\\
    {%
\setlength{\fboxsep}{2.5pt}%
\setlength{\fboxrule}{2.5pt}%
\fcolorbox{white}{white}{
    \includegraphics[width=.15\linewidth]{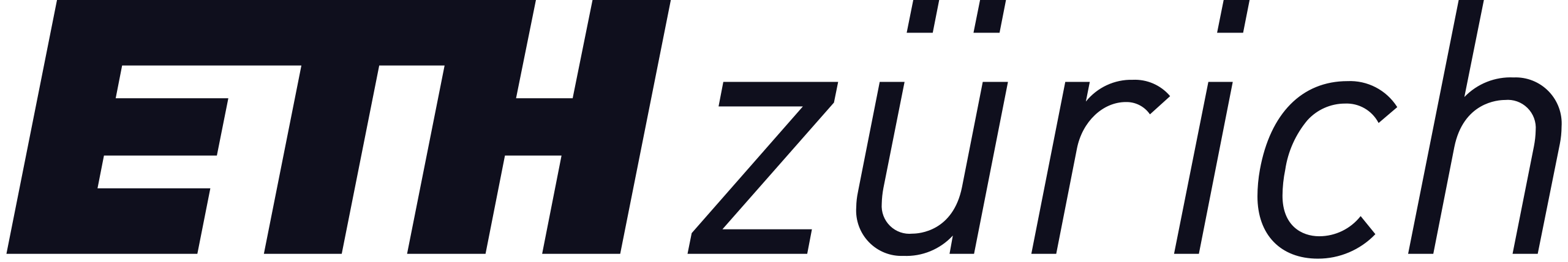}
}
}}
\begin{document}
\renewcommand*{\thefootnote}{\fnsymbol{footnote}}
\maketitle
\renewcommand*{\thefootnote}{\arabic{footnote}}

\begin{abstract}
    The recent successes and spread of large neural language models (LMs) call for a thorough understanding of their computational ability.
    Describing their computational abilities through LMs' \emph{representational capacity} is a lively area of research.
    However, investigation into the representational capacity of neural LMs has predominantly focused on their ability to \emph{recognize} formal languages.
    For example, recurrent neural networks (RNNs) with Heaviside activations are tightly linked to regular languages, i.e., languages defined by finite-state automata (FSAs).
    Such results, however, fall short of describing the capabilities of RNN \emph{language models} (LMs), which are definitionally \emph{distributions} over strings.
    We take a fresh look at the representational capacity of RNN LMs by connecting them to \emph{probabilistic} FSAs and demonstrate that RNN LMs with linearly bounded precision
    can express arbitrary regular LMs.\looseness=-1
    
    \vspace{0.5em}
    {\includegraphics[width=1.36em,height=1.25em]{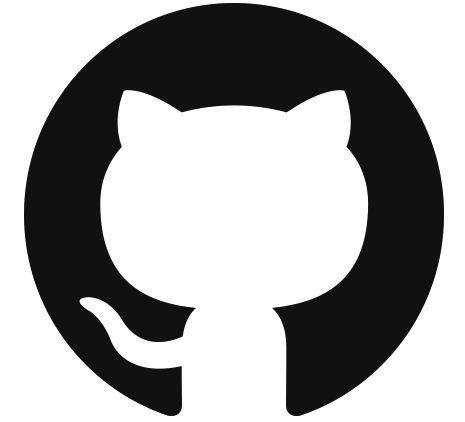}\hspace{1em}\parbox{\dimexpr\linewidth-2\fboxsep-2\fboxrule}{\url{https://github.com/rycolab/nondeterministic-rnns}}}
\end{abstract}

\section{Introduction} \label{sec:intro}
Neural language models (LMs) excel at many NLP tasks, e.g., machine translation, conversational AI, text classification, and natural language inference \citep{cho-etal-2014-learning, albalak-etal-2022-feta, sun-etal-2023-text, schick-schutze-2021-exploiting}.
Their strong empirical performance suggests the need for theory to explain what LMs can and cannot do in a formal sense.
Such theory should serve to better understand and alleviate the practical limitations of neural LMs.
However, the most common paradigm to perform such an analysis of neural LMs revolves around proving which formal languages common LM architectures can express \citep[\textit{inter alia}]{ackerman2020survey, icard-2020-calibrating, merrill-etal-2020-formal, perez-etal-2021-attention}, i.e., to characterize the representational capacity of neural LMs with tools from the theory of computation.\looseness=-1

\begin{figure*}[t]
    \centering
    \footnotesize
    \begin{tikzpicture}[auto,
            start chain = going right,
            box/.style = {draw, rounded corners, blur shadow,fill=white, on chain, align=center, minimum height=1cm, minimum width=1.5cm}]
        \node[box,fill=ETHBlue!20] (b1)    {Exact simulation \\ with $\sparsemax$ and \\ $\norm{\cdot}_1$ normalization.};
        \node[draw=none] (t1) [above = 3mm of b1]   {\textcolor{ETHBlue}{\cref{thm:sparsemax}}};
        \node[box,fill=ETHBlue!20] (b2)   [right = 1cm of b1] {Exact simulation \\ with softmax, \\ $\log$ activation, and $-\infty$.};
        \node[draw=none] (t2) [above = 3mm of b2]   {\textcolor{ETHBlue}{\cref{thm:softmax}}};
        \node[box,fill=ETHBlue!20] (b3)   [right = 1cm of b2] {Approximate simulation \\ with softmax and \\ $\log$ activation.};
        \node[draw=none] (t3) [above = 3mm of b3]   {\textcolor{ETHBlue}{\cref{thm:aernn-approx-simulation}}};
        \node[box,fill=ETHBlue!20] (b4)   [right = 1cm of b3] {Approximate simulation \\ with softmax \\ and an MLP.};
        \node[draw=none] (t4) [above = 3mm of b4]   {\textcolor{ETHBlue}{\cref{thm:dornn-approximation-proof}}};
        \begin{scope}[dashed, rounded corners,-{Latex[length=1.8mm, width=1.4mm]}]
            \path       (b1) edge (b2);
            \path       (b2) edge (b3);
            \path       (b3) edge (b4);
        \end{scope}
    \end{tikzpicture}
    \caption{A roadmap of the results in this paper describing the representational capacity of Elman LMs.
    We start with an intuitive result showing how a sparsemax RNN with an $\norm{\cdot}_1$-normalized hidden state can exactly simulate a \fslmAcr{}.
    We then bring these modeling choices closer to practical implementations, leading us to the final result, which shows how sparsemax Elman LMs with an MLP-transformed hidden state can approximate arbitrary \fslmAcr{}s.\looseness=-1}
    \label{fig:roadmap}
    \vspace{-15pt}
\end{figure*}
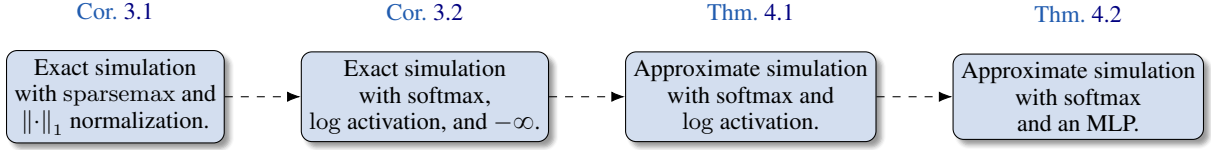

Formally, a language model is a probability distribution over $\kleene{\alphabet}$, the set of all strings from some alphabet $\alphabet$.\footnote{An alphabet is a finite, non-empty set of symbols.}
We say that two language models are weakly equivalent if they express the same distribution over strings.
\pfsaAcr{}s (both deterministic and non-deterministic) and recurrent neural LMs, e.g., those derived from Elman recurrent neural networks \citep[RNNs;][]{Elman1990}, however, are instances of \emph{families} of language models, i.e., sets of distributions over strings that can be represented under the specific formalism.
For instance, if one changes the parameters of a \pfsaAcr{}, one arrives at a different language model.
Therefore, going beyond the formal comparison of individual LMs, we may also desire to make formal statements about families of models.
One natural way to construct an (upper) bound on the representational capacity of an entire family of language models $\family_1$, e.g., those expressible by RNNs, is to find another family of language models $\family_2$ and show that, for every language model $q \in \family_2$, there exists a weakly equivalent $p \in \family_1$.
Our goal in this paper is to show that, for every non-deterministic \pfsaAcr{}, there exists a weakly equivalent RNN LM.\looseness=-1

The approach to studying language models presented in the previous paragraph differs fundamentally from most work that uses formal language theory to study the representational capacity of language models. 
Indeed, most papers try to relate language models to language \emph{acceptors} \citep[\emph{inter alia}]{Siegelmann1992OnTC, merrill-etal-2020-formal, perez-etal-2021-attention, merrill-etal-2022-saturated}.
E.g., finite-state automata have been linked to RNNs extensively before \citep{noga-1991-efficient, indyk-1995-optimal, merrill-2019-sequential, merrill2022extracting}.
However, acceptors are fundamentally different objects than LMs, as they define \emph{sets} of strings rather than distributions over them.
Thus, analyzing acceptors, does not address our desire to understand recurrent neural LMs as probability distributions over $\kleene{\alphabet}$.

In previous work, \citet{svete2023recurrent} demonstrate a particularly close relationship between recurrent neural language models and \emph{deterministic} \pfsaAcr{}s.
However, their theory does not extend to the family of non-deterministic \pfsaAcr{}s because it is a basic fact of probabilistic formal language theory that not all \pfsaAcr{}s can be determinized \citep{Allauzen2003EfficientAF}.
To date, no corresponding result exists for the non-deterministic case.
To fill this void, we give a construction that shows real-time $\ReLU$-activated Elman RNNs (Elman LMs) with linearly bounded precision can exactly simulate or arbitrary approximate \emph{non}-deterministic \pfsaAcr{}s depending on the components employed in the network.
Concretely, we show that \begin{enumerate*}[label=\textit{(\arabic*)}]
    \item Elman LMs with the sparsemax \citep{sparsemax} function can represent any regular LM exactly.
    \item Elman LMs with the softmax projection function and a nonlinear output function can approximate a regular LM arbitrarily well.
\end{enumerate*}
The roadmap of the paper is presented in \cref{fig:roadmap}.\looseness=-1

\section{Preliminaries} \label{sec:preliminaries}
We begin by introducing some preliminaries.

\subsection{Language Modeling}
Most modern neural LMs define the probability $\pLM\left(\str\right)$ of a string $\str \in \kleene{\alphabet}$ as a product of conditional probability distributions $\pLNSM$, i.e.,
\begin{equation} \label{eq:lnlm}
    \pLN\left(\str\right) \defeq \pLNSM\left(\eos\mid\str\right) \prod_{\tstep = 1}^{|\str|} \pLNSM\left(\symt \mid \strlt\right),
\end{equation}
where $\eos \notin \alphabet$ is a special \underline{e}nd-\underline{o}f-\underline{s}equence symbol.
A language model expressed as in \Cref{eq:lnlm} is called \defn{autoregressive}.
Let $\eosalphabet \defeq \alphabet \cup \left\{\eos\right\}$.
Then, each $\pLNSM\left(\eossym_\tstep \mid \strlt\right)$ is a distribution over $\eosalphabet$.
Additionally, we may also consider $\varepsilon$-augmented language models where each $\pLNSM$ is a distribution over $\eosalphabet \cup \{ \varepsilon\}$ where $\varepsilon \not\in \alphabet$ is a special symbol not in the alphabet that represents the empty string.
This allows the language model to perform computations \emph{longer} than the length of the string it generates.
An autoregressive language model is called \defn{real-time} if each $\pLNSM$ is \emph{only} distribution over $\eosalphabet$, i.e., not over $\eosalphabet \cup \{ \varepsilon\}$.
Throughout this paper, we construct and investigate the expressivity of real-time autoregressive language models based on RNNs.

At a high level, we are interested in encoding real-time LMs as RNN LMs.
To do so, we need a notion of equivalence between language models.
In this paper, we will work with weak equivalence.
\begin{definition} \label{def:weak-equivalence}
    Two LMs $\pLM$ and $\qLM$ over $\kleene{\alphabet}$ are \defn{weakly equivalent} if $\pLM\left(\str\right) = \qLM\left(\str\right)$ for all $\str \in \kleene{\alphabet}$.\looseness=-1
\end{definition}

\subsection{Regular Language Models} \label{sec:wfsas}
Probabilistic finite-state automata are a well-understood real-time computational model.\looseness=-1
\begin{definition}\label{def:stochastic-wfsa}
  A \defn{probabilistic finite-state automaton} (\pfsaAcr{}) is a 5-tuple $\wfsatuple$ where $\alphabet$ is an alphabet, $\states$ is a finite set of states, $\trans \subseteq \states \times \alphabet \times \Qnonnegative \times \states$ is a finite set of weighted transitions
    where we write transitions $\left(\stateq, \sym, w, \stateq^\prime\right) \in \trans$ as $\edge{\stateq}{\sym}{w}{\stateq^\prime}$,\footnote{We further assume a $(\stateq, \sym, \stateq^\prime)$ triple appears in at most \emph{one} element of $\trans$.\looseness=-1}
    and $\initf, \finalf\colon \states \rightarrow \Qnonnegative$ are functions that assign each state its initial and final weight, respectively.
    Moreover, for all states $\stateq \in \states$, $\trans, \initf$ and $\finalf$ satisfy $\sum_{\stateq \in \states} \initf\left(\stateq\right) = 1$, and $\sum\limits_{\edge{\stateq}{\sym}{w}{\stateq^\prime} \in \trans} w + \finalf\left(\stateq\right) = 1$.
\end{definition}

We next define some basic concepts. 
A \pfsaAcr{} $\automaton = \wfsatuple$ is \defn{deterministic} if $|\set{\stateq \mid \initfFun{\stateq} > 0}| = 1$ and, for every $\stateq \in \states, \sym \in \alphabet$, there is at most one $\stateq^\prime \in \states$ such that $\edge{\stateq}{\sym}{w}{\stateq^\prime} \in \trans$ with $w > 0$.%
\footnote{In this paper, we do \emph{not} distinguish between a transition for a given symbol with weight $w=0$ and the absence of a transition on that symbol. 
That is, we assume there always exists a transition $\edge{\stateq}{\sym}{w}{\stateq'} \in \trans$ for any $\stateq, \stateq' \in \states$ and $\sym \in \alphabet$, albeit possibly with $w = 0$.
Such a choice turns out to be useful in our technical exposition; see \Cref{alg:deltapfsa}.
}
Any state $\stateq$ where $\initfFun{\stateq}>0$ is called an \defn{initial state}, and if $\finalfFun{\stateq} > 0$, it is called a \defn{final state}.
A \defn{path} $\apath$ of length $\pathlen$ is a sequence of subsequent transitions in $\automaton$, denoted as\looseness=-1
\begin{equation}
    \!\edge{\stateq_1}{\sym_1}{w_1}{\edge{\stateq_2}{\sym_2}{w_2}{\stateq_3} \!\cdots\! \edge{\stateq_{\pathlen}}{\sym_{\pathlen}}{w_{\pathlen}}{\stateq_{\pathlen + 1}}}.
\end{equation}
The \defn{yield} of a path is $\yield\left(\apath\right)\defeq \sym_1 \ldots \sym_{\pathlen}$.
The \defn{prefix weight} $\prefixweight$ of a path $\apath$ is the product of the transition and initial weights, whereas the \defn{weight} of a path additionally has the final weight multiplied in.
In symbols, this means

\noindent\begin{minipage}{0.49\linewidth}
\begin{equation} \label{eq:prefix-path-weight}
    \prefixweight(\apath)\defeq \prod_{\idx = 0}^\pathlen w_\idx,
\end{equation}
\end{minipage}
\begin{minipage}{0.49\linewidth}
\begin{equation}
    \weight(\apath)\defeq \prod_{\idx = 0}^{\pathlen+1} w_\idx,
\end{equation}
\end{minipage}
with $w_0 \defeq \initf(\stateq_1)$ and $w_{\pathlen+1} \defeq \finalf(\stateq_{\pathlen+1})$.
We write $\paths(\automaton)$ for the set of all paths in $\automaton$ and we write $\paths(\automaton, \str)$ for the set of all paths in $\automaton$ with yield $\str$.
The sum of weights of all paths that yield a certain string $\str\in\kleene{\alphabet}$ is called the \defn{stringsum}, given in the notation below
\begin{equation}
\automaton \left( \str \right) \defeq \sum_{\apath \in \paths\left( \automaton, \str \right) }  \weight \left( \apath \right).
\end{equation}
The stringsum gives the probability of the string $\str$.
We say a state $\stateq \in \states$ is \defn{accessible} if there exists a path with non-zero weight from an initial state to $\stateq$. 
A state $\stateq \in \states$ is \defn{co-accessible} if there exists a path with non-zero weight from $\stateq$ to a final state.
An automaton in which all states are accessible and co-accessible is called \defn{trim}.

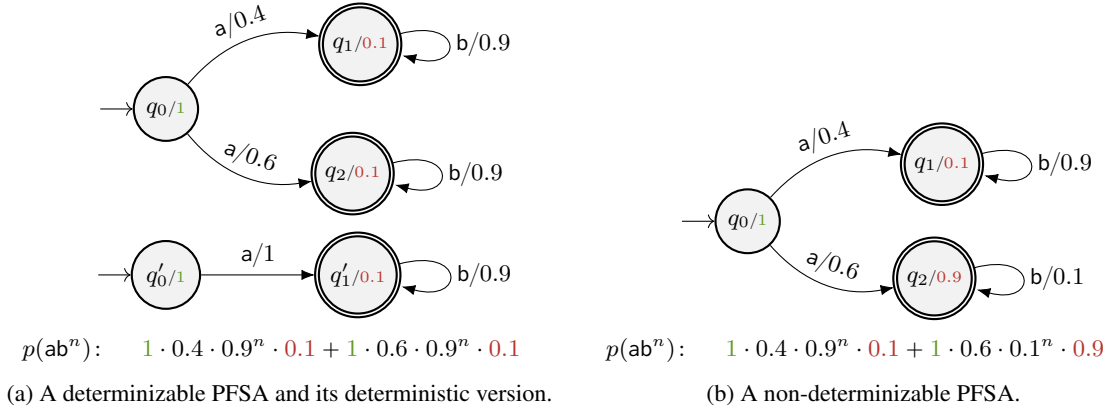
\begin{figure*}[t!]
    \centering
    \begin{subfigure}{0.45\textwidth}
    \begin{tikzpicture}[node distance=8mm, minimum size=12mm]
        \footnotesize
        \node[state, initial] (q0) [] { $\stateq_0 {\scriptstyle/\textcolor{ETHGreen}{1}}$ };
        \node[state, accepting] (q1) [above right = of q0, xshift=13mm, yshift=-4mm] { $\stateq_1{\scriptstyle/\textcolor{ETHRed}{0.1}}$ };
        \node[state, accepting] (q2) [below right = of q0, xshift=12mm, yshift=4mm] { $\stateq_2{\scriptstyle/\textcolor{ETHRed}{0.1}}$ };
        \draw[transition] (q0) edge[auto, bend left, sloped] node[minimum size=4mm]{ $\syma/{0.4}$ } (q1)
        (q0) edge[auto, bend right, sloped] node[minimum size=4mm]{ $\syma/{0.6}$ } (q2)
        (q1) edge[auto, loop right] node[minimum size=4mm]{ $\symb/{0.9}$ } (q1)
        (q2) edge[auto, loop right] node[minimum size=4mm]{ $\symb/{0.9}$ } (q2) ;
        
        \node[state, initial] (q10) [below = 13 mm of q0] { $\stateq'_0{\scriptstyle/\textcolor{ETHGreen}{1}}$ };
        \node[state, accepting] (q11) [right = 15 mm of q10] { $\stateq'_1{\scriptstyle/\textcolor{ETHRed}{0.1}}$ };
        \draw[transition] (q10) edge[auto] node[minimum size=4mm]{ $\syma/{1}$ } (q11)
        (q11) edge[auto, loop right] node[minimum size=4mm]{ $\symb/{0.9}$ } (q11) ;
    \end{tikzpicture}
    \begin{tabular}{ll}
        \footnotesize $\pdens(\syma \symb^n)\colon$ & \footnotesize $\textcolor{ETHGreen}{1} \cdot 0.4 \cdot 0.9^n \cdot \textcolor{ETHRed}{0.1} + \textcolor{ETHGreen}{1} \cdot 0.6 \cdot 0.9^n \cdot \textcolor{ETHRed}{0.1} $
    \end{tabular}
    \caption{A determinizable \pfsaAcr{} and its deterministic version.
    }
    \label{fig:example-fslm-det}
    \end{subfigure}
    \quad
    \begin{subfigure}{0.45\textwidth}
    \begin{tikzpicture}[node distance=8mm, minimum size=12mm]
        \footnotesize
        \node[state, initial] (q0) [] { $\stateq_0{\scriptstyle/\textcolor{ETHGreen}{1}}$ };
        \node[state, accepting] (q1) [above right = of q0, xshift=13mm, yshift=-5mm] { $\stateq_1{\scriptstyle/\textcolor{ETHRed}{0.1}}$ };
        \node[state, accepting] (q2) [below right = of q0, xshift=12mm, yshift=5mm] { $\stateq_2{\scriptstyle/\textcolor{ETHRed}{0.9}}$ };
        \draw[transition] (q0) edge[auto, bend left, sloped] node[minimum size=4mm]{ $\syma/{0.4}$ } (q1)
        (q0) edge[auto, bend right, sloped] node[minimum size=4mm]{ $\syma/{0.6}$ } (q2)
        (q1) edge[auto, loop right] node[minimum size=4mm]{ $\symb/{0.9}$ } (q1)
        (q2) edge[auto, loop right] node[minimum size=4mm]{ $\symb/{0.1}$ } (q2) ;
    \end{tikzpicture}
    \begin{tabular}{ll}
        \footnotesize $\pdens(\syma \symb^n)\colon$ & \footnotesize $\textcolor{ETHGreen}{1} \cdot 0.4 \cdot 0.9^n \cdot \textcolor{ETHRed}{0.1} + \textcolor{ETHGreen}{1} \cdot 0.6 \cdot 0.1^n \cdot \textcolor{ETHRed}{0.9} $
    \end{tabular}
    \caption{A non-determinizable \pfsaAcr{}.\looseness=-1}
    \label{fig:example-fslm}
    \end{subfigure}
    \caption{Examples of \pfsaAcr{}s inducing probability distributions over $\kleene{\set{\syma, \symb}}$. Numbers in {\color{ETHGreen}green} signify initial weights, and numbers in {\color{ETHRed}red} signify final weights.}
    \label{fig:examples-fslm}
    \vspace{-15pt}
\end{figure*}

\paragraph{PFSAs as Autoregressive Language Models.}
We now show how a \pfsaAcr{} $\automaton$ induces a LM $\pLMA$ over strings $\str\in\kleene{\alphabet}$.
By \cref{def:stochastic-wfsa}, the weights of all available transitions of a \pfsaAcr{} in state $\stateq$, together with the final weight, define a probability distribution over the next action, i.e., taking a transition or halting.
We can translate this into a distribution over $\eosalphabet$ where $\eos$ corresponds to halting.
In the following, we use the notation  $\eossym\in\eosalphabet$ and $\sym\in\alphabet$ to distinguish between symbols that could be $\eos$ and those that cannot, respectively.
Specifically, we can define the probability over $\eosalphabet$ as follows
\begin{equation}
   \!\!\pLMAFun{\textcolor{ETHRed}{\eossym_\tstep}}{\textcolor{ETHPurple}{\stateq}, \str_{<t}} \!=\! \begin{cases}
        {\displaystyle \sum_{\edge{\textcolor{ETHPurple}{\stateq}}{\textcolor{ETHRed}{\eossym_\tstep}}{w}{\stateq'}}} w  & \ifcond \textcolor{ETHRed}{\eossym_\tstep}\in\alphabet\\
        \finalf\left(\textcolor{ETHPurple}{\stateq}\right) & \ifcond \textcolor{ETHRed}{\eossym_\tstep} = \eos.
    \end{cases}\label{eq:plmpfsa}
\end{equation}
Moreover, in a \pfsaAcr{}, the probability of $\eossym$ is conditionally independent of $\str_{<t}$ given the state $q$, i.e.,
\begin{equation}
\statedistributionFun{\eossym_\tstep}{\stateq, \str_{<t}} = \statedistributionFun{\eossym_\tstep}{\stateq}.\label{eq:markov}
\end{equation}
Finally, using the law of total probability, we can define an autoregressive language model as
\begin{subequations}
\begin{align}\label{eq:pfsa-autoregressive}
\!\!&\pLMAFun{\eossym_\tstep}{\strlt}\defeq \sum_{\stateq \in \states} \pLMAFun{\eossym_\tstep}{\stateq}\,\pLMAFun{\stateq}{\strlt} \nonumber \\
&= \sum_{\stateq \in \states} \pLMAFun{\eossym_\tstep}{\stateq}\,\frac{\pLMA\left(\stateq, \strlt\right)}{\pLMA\left(\strlt\right)} \\
&= \sum_{\stateq \in \states} \pLMAFun{\eossym_\tstep}{\stateq}\,\frac{\pLMA\left(\stateq, \strlt\right)}{\sum_{\stateq' \in \states} \pLMA\left(\stateq', \strlt\right)}, \label{eq:pfsa-autoregressive-last}
\end{align}
\end{subequations}
where $\pLMA\left(\stateq, \strlt\right)$ can be written as
\begin{equation}\label{eq:next-state}
    \pLMA\left(\stateq, \strlt\right) = \left(\overset{\rightarrow}{\initf}^{\top} \prod_{s=1}^t \mT^{\left(\sym_s\right)}\right)_\stateq,
\end{equation}
where $\overset{\rightarrow}{\initf}$ and $\mT^{\left(\sym_s\right)}$ refer to the vectorized initial function and symbol-specific transition matrices of the \pfsaAcr{} $\wfsa$, respectively; see \cref{app:pfsa} for a full specification.
\Cref{eq:pfsa-autoregressive} shows that a \pfsaAcr{} induces a language model $\pLMA$ as in \cref{eq:lnlm} through the conditional probabilities defined above.

\begin{definition}
  An LM $\pLM$ is a \defn{regular} language model if there exists a \pfsaAcr{} $\automaton$ whose induced language model $\pLMA$ is weakly equivalent to $\pLMA$.\looseness=-1
\end{definition}
See \cref{fig:examples-fslm} for examples of \pfsaAcr{}s that induce regular language models over $\alphabet = \set{\syma, \symb}$.

\paragraph{PFSAs and FSAs.}
Although \pfsaAcr{}s share many properties with unweighted (boolean-weighted) finite-state automata, one important difference relates to determinization.
In the unweighted case, the class of deterministic and non-deterministic FSAs are equivalent, i.e., any non-deterministic FSA has an equivalent deterministic FSA that accepts the same language.
This result, however, does not hold for \pfsaAcr{}s: There exist \pfsaAcr{}s that admit no deterministic equivalent \citep{mohri-1997-finite, Buchsbaum1998}, meaning that non-deterministic \pfsaAcr{}s are strictly more expressive than deterministic ones.
For example, the \pfsaAcr{} in \cref{fig:example-fslm} is non-deterministic and does not admit a deterministic equivalent, since $\pdens(\syma \symb^n)$ cannot be expressed as a single term for arbitrary $n \in \Nzero$.\looseness=-1

\subsection{Recurrent Neural Language Models}\label{sec:rnnlms}
Recurrent neural LMs are LMs whose conditional probabilities are given by a recurrent neural network.
Throughout this paper, we will focus on Elman RNNs \citep{Elman1990} as they are the easiest to analyze and special cases of more common networks, e.g., those based on long short-term memory \citep[LSTM;][]{10.1162/neco.1997.9.8.1735} and gated recurrent units \citep[GRUs;][]{cho-etal-2014-properties}.\looseness=-1
\begin{definition} \label{def:elman-rnn}
    An \defn{Elman RNN} (\ernnAcr{}) $\rnn = \elmanrnntuple$ is an RNN with the following hidden state recurrence:
    \begin{subequations}
    \begin{align}
    \hiddStatetzero &= \initstate  \quad\quad\quad\quad\quad\quad\quad\quad\quad\quad\,\,\,{\color{gray}(t=0)}\label{eq:elman-initialization} \\
    \hiddStatet &= \activation\left(\recMtx \vhtminus + \inMtx \inEmbedSymt + \biasVech \right) \,\, {\color{gray}(t>0)},\label{eq:elman-update-rule}
    \end{align}
    \end{subequations}
    where $\hiddStatet \in \Q^\hiddDim$ is the hidden state vector\footnote{Throughout this paper all vectors are column vectors.\looseness=-1}
    at time step $\tstep$, $\initstate \in \Q^\hiddDim $ is an initialization parameter, $\symt\in\alphabet$ is the input symbol at time step $\tstep$, $\inEmbedding\colon \alphabet \to \Q^\embedDim$ is a symbol representation function, $\recMtx \in \Q^{\hiddDim \times \hiddDim}$ and $\inMtx \in \Q^{\hiddDim \times \embedDim}$ are parameter matrices, $\biasVech \in \Q^{\hiddDim}$ is a bias vector, and $\activation\colon\Q^\hiddDim\to\Q^\hiddDim$ is an element-wise non-linear activation function.
\end{definition}
Because $\hiddStatet$ hides the string that was consumed by the Elman RNN, we also use the evocative notation $\hiddState(\str)$ to denote the result of the application of \Cref{eq:elman-update-rule} over the string $\str = \sym_1 \cdots \sym_t$.
Common examples of $\activation$ include the element-wise application of $\ReLU$, i.e., $\ReLU(x) \defeq \max(0, x)$, the Heaviside function $\heaviside(x) \defeq \ind{x > 0}$ and the sigmoid function $\sigmoid(x) \defeq \frac{1}{1 + \exp(-x)}$.
The $\ReLU$ function is the most common choice of activation in modern deep learning and the focus of our analysis in this paper \citep{Goodfellow-et-al-2016}.
To define an autoregressive language model, an Elman RNN constructs a distribution over the next symbol in $\eosalphabet$ by transforming the hidden state using some function $\mlp\colon \R^\hiddDim \to \R^\eosnsymbols$.
\begin{definition} \label{def:elman-lm}
    Let $\rnn$ be an \ernnAcr and $\mlp\colon \R^\hiddDim \to \R^\eosnsymbols$ a differentiable function.
    An \defn{Elman LM} is an LM whose conditional distributions are defined by projecting $\mlpFun{\hiddStatetminus}$ onto the probability simplex $\SimplexEosalphabetminus$ using a projection function $\projfuncEosalphabetminus\colon \R^{\eosnsymbols} \to \SimplexEosalphabetminus$:
    \begin{equation} \label{eq:next-symbol-probabilities}
        \pLMRFun{\eossym_\tstep}{\str_{<\tstep}} \defeq \projfuncEosalphabetminusFunc{\mlpFun{\hiddStatetminus}}_{\eossym_\tstep}.
    \end{equation}
    We call $\mlp$ an \defn{output function}.
\end{definition}
The vector $\mlpFun{\hiddStatetminus} \in \R^{\eosnsymbols}$ thus represents a vector of $\eosnsymbols$ values that are later projected onto $\SimplexEosalphabetminus$ to define $\pLMRFun{\eossym_\tstep}{\str_{<\tstep}}$.
The most common choice for the projection function $\projfuncEosalphabetminus$ is the \defn{softmax} defined for $\vx \in \R^\setsize, \setsize\in\N$, 
and $\idxn \in \NTo{\setsize}$ as\looseness=-1 
\begin{equation}\label{eq:softmax}
     \softmaxfunc{\vx}{\idxn} \defeq \frac{\exp \left(\invtemp \, \evx_\idxn\right)}{\sum_{\idxn' = 1}^{\setsize} \exp{\left(\invtemp \, \evx_{\idxn'}\right)}},
\end{equation}
where $\invtemp\in\R$ is called the \defn{inverse temperature} parameter.
Softmax may be viewed as the solution to the following convex optimization problem
\begin{equation}
     \softmaxfunc{\vx}{} = \argmax_{\rvz\in\Simplexnminus} \rvz^\top\vx+\entropy{\rvz},
\end{equation}
where $\entropy{\rvz}\defeq-\sum_{\idxn=1}^\setsize z_\idxn\log z_\idxn$ is the Shannon entropy.
\footnote{See \citet[Ex. 25]{boyd2004convex} for a derivation.\looseness=-1}
An important limitation of the softmax is that it implies the LM has full support, i.e., an Elman LM with a softmax projection assigns positive probability to all strings in $\kleene{\alphabet}$.
To construct Elman LMs without full support,
we also consider the \defn{sparsemax} \citep{sparsemax} as our projection function:
\begin{equation}\label{eq:spmax}
    \sparsemaxfunc{\vx}{} \defeq \argmin_{\rvz\in \Simplexnminus} \norm{\rvz - \vx}^2_2.
\end{equation}
Note that this, too, is a convex optimization problem.
In contrast to the softmax function, sparsemax is the \defn{identity} on $\Simplexnminus$, i.e., we have $\sparsemaxfunc{\vx}{} = \vx$ for $\vx \in \Simplexnminus$.

\paragraph{Common output functions.}
We now outline three special cases of $\mlp$ which will be useful in our exposition.
\begin{itemize}[itemsep=0pt]
    \item \defn{Affine output Elman LMs}. 
    A common way to implement $\mlp$ is as an affine transformation.
    In that case, $\mlp$ takes the form of $\mlpFun{\hiddState} \defeq \outMtx \hiddState + \vd$ for some $\outMtx \in \R^{\eosnsymbols \times \hiddDim}, \vd \in \R^{\eosnsymbols}$. 
    We call $\outMtx$ the \defn{output matrix}.
    \item \defn{Non-linear output Elman LMs}. 
    In this case, $\mlp$ is an arbitrary non-linear function.
    \item \defn{Deep output Elman LMs}.
    $\mlp$ is often implemented as a multi-layer perceptron in practice \citep{pascanu2014construct}.
    In that case, we call $\mlp$ a deep output Elman LM.\looseness=-1
\end{itemize}
To concretely discuss the deep output Elman LMs, we now give the definition of a multi-layer perceptron, a common non-linear model we will deploy as the output function of the Elman RNN.
Our definition is precise since we will later require an application of a universal approximation theorem. 
\begin{definition}
A \defn{multi-layer perceptron} (MLP) $\mlp\colon\R^N\to\R^M$ is a function defined as the composition of elementary functions $\vfunc_1, \cdots, \vfunc_L$ 
\begin{equation}
    \mlp\left(\vx\right) \defeq \vfunc_L \circ \vfunc_{L-1} \circ \cdots \circ \vfunc_1 \left(\vx\right),
\end{equation}
where each function $\vfunc_\ell$ for $\ell\in \NTo{L}$ is defined as
\begin{subequations}
\begin{align}
    \vfunc_\ell(\vx) &\defeq \mlpActivation\left(\mW_\ell\vx + \vb_\ell \right)\quad \ell\in \NTo{L-1} \\
    \vfunc_L(\vx) &\defeq \mW_L\vx + \vb_L,
\end{align}
\end{subequations}
where $\mW_\ell\in\R^{N_\ell\times M_\ell}$ is a weight matrix with dimensions $N_\ell$ and $M_\ell$ specific to layer $\ell$, $\vb_\ell\in\R^{M_\ell}$ is a bias vector, and $\mlpActivation$ is an element-wise non-linear activation function.
The function $\vfunc_1$ is called the \defn{input layer}, the function $\vfunc_L$ is called the \defn{output layer}, and the function $\vfunc_\ell$ for $\ell = 2, \cdots, L-1$ are called \defn{hidden layers}.\footnote{Note that we refer to MLPs by the number of hidden layers, e.g., a one-layer-MLP is an MLP with one \emph{hidden} layer.\looseness=-1
}
\end{definition}
Besides their common application in neural LMs, MLPs with various activation functions also possess well-understood approximation abilities \citep{cybenko_approximation_1989, funahashi_approximate_1989, hornik_multilayer_1989, Pinkus_1999}.
This makes their application to our goal---approximating language models---natural.
In our constructions, we concretely focus on the result by \citet{Pinkus_1999} due to its generality.
We restate it below in our own words, slightly adapting it to our use case:%
\footnote{In his original theorem, \citet{Pinkus_1999} only proves the approximation for real-valued functions $\mathbf{G}$, i.e., $M=1$. 
Our restatement trivially extends the original theorem to a range of $\R^M$ noting that one can use one large single-layer MLP consisting of $M$ separate single-layer MLPs in parallel to approximate $M$ individual functions. Furthermore, the original theorem also proves the converse direction that for polynomial $\mlpActivation$ no such MLP exists, which we drop for brevity.}
\begin{theorem}[{\citet[Theorem 3.1]{Pinkus_1999}}]\label{thm:pinkus}
    For any continuous, non-polynomial element-wise activation function $\mlpActivation$, any continuous function $\mathbf{G} :\R^N\to\R^M$, any compact subset $\sK\subset \R^N$, and any $\epsilon>0$, there exists a single-layer MLP $\mlp: \R^N \to\R^M$ with activation function $\mlpActivation$ such that $\max\limits_{\vx\in \sK}\|\mathbf{G}(\vx)-\mlp(\vx)\|_\infty<\epsilon$.\looseness=-1
\end{theorem}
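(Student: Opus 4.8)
The statement is a mild, vector-valued extension of the scalar ($M=1$) result \citet[Theorem 3.1]{Pinkus_1999}, so the plan is to take the scalar case as a black box and reduce the general case to it by running $M$ scalar approximators ``in parallel'' inside one network. First I would write $\mathbf{G} = (G_1, \dots, G_M)$ for the coordinate functions of $\mathbf{G}$; each $G_i \colon \R^N \to \R$ is continuous because $\mathbf{G}$ is, so --- together with $\sK$ compact and $\mlpActivation$ continuous and non-polynomial --- the hypotheses of the scalar theorem hold for every $G_i$.\looseness=-1

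Fixing $\epsilon > 0$ and applying the scalar theorem to each $G_i$ would yield, for every $i \in \NTo{M}$, a single-layer MLP $F_i(\vx) = \mathbf{c}_i^\top \mlpActivation(\mW_i \vx + \vb_i) + d_i$ with $\max_{\vx \in \sK} \abs{G_i(\vx) - F_i(\vx)} < \epsilon$. I would then assemble these into one single-layer MLP $\mlp$ by \emph{widening}: stack the corresponding layers of $F_1, \dots, F_M$ side by side, using the vertical concatenation $\mW \defeq (\mW_1^\top, \dots, \mW_M^\top)^\top$ and $\vb \defeq (\vb_1^\top, \dots, \vb_M^\top)^\top$ in the first layer --- so that, because $\mlpActivation$ acts element-wise, the post-activation vector is exactly the concatenation of the $M$ individual post-activation vectors --- together with the block-diagonal affine output $\vx \mapsto \diag{\mathbf{c}_1^\top, \dots, \mathbf{c}_M^\top}\,\vx + (d_1, \dots, d_M)^\top$. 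This $\mlp$ has the same layer structure as each $F_i$, with the same activation $\mlpActivation$ wherever the $F_i$ had it, so it is again a single-layer MLP in the sense of the definition above, and by construction $\mlp = (F_1, \dots, F_M)$. The error bound would then be immediate: for every $\vx \in \sK$,
\[
\norm{\mathbf{G}(\vx) - \mlp(\vx)}_\infty = \max_{i \in \NTo{M}} \abs{G_i(\vx) - F_i(\vx)} \le \max_{i \in \NTo{M}} \max_{\vx' \in \sK} \abs{G_i(\vx') - F_i(\vx')} < \epsilon,
\]
and since $\sK$ is compact, the maximum over $\vx \in \sK$ of the left-hand side is attained and hence still strictly below $\epsilon$.

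I do not expect a genuine obstacle here --- the argument is essentially bookkeeping. The one step that needs a line of care is checking that the parallel construction really stays within the class of single-layer MLPs, which holds precisely because $\mlpActivation$ is applied coordinatewise: stacking the hidden units introduces no cross-coordinate nonlinearity, and the output layer remains affine. All of the analytic substance --- the uniform-on-compacta density of single-hidden-layer networks with a continuous non-polynomial activation --- is inherited from \citet{Pinkus_1999} and would not be reproved.
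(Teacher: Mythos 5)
Your proposal is correct and matches the paper's own treatment: the paper takes the scalar ($M=1$) case of \citet[Theorem 3.1]{Pinkus_1999} as given and, in a footnote, extends it to $\R^M$-valued targets exactly by running $M$ scalar single-layer MLPs in parallel inside one wider single-layer network, which is precisely your widening/stacking construction with the coordinatewise activation and affine output. No gap to report.
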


\paragraph{Bounded Precision.}
We assume the hidden states $\hiddStatet$ to be rational vectors.
An important consideration in processing strings $\str \in \kleene{\alphabet}$ is then the number of bits required to represent the entries in $\hiddStatet$ and how the number of bits scales with the length of the string, $|\str|$.
This motivates the definition of precision.
\begin{definition}
The \defn{precision} $\precisionFun{\str}$ of an Elman RNN is the number of bits required to represent the entries of $\hiddState\left(\str\right)$:
\begin{equation}
    \precisionFun{\str} \defeq \max_{\idxd\in[\hiddDim]}\min_{\substack{p,q\in\N,\\ \frac{p}{q}=\hiddState\left(\str\right)_{\idxd}}} \lceil\log_2 p\rceil + \lceil\log_2 q\rceil.
\end{equation}
We say that an Elman RNN is of
\begin{itemize}[itemsep=0pt]
    \item \defn{constant precision} if $\precisionFun{\str} = \bigO{1}$, i.e., if $\precisionFun{\str} \leq C$ for all $\str \in \kleene{\alphabet}$ and some $C \in \R$,
    \item \defn{logarithmically bounded precision} if $\precisionFun{\str} = \bigO{\log|\str|}$, i.e., if there exist $\strlen_0 \in\N$ and $C\in\R$ such that for all $\str \in \kleene{\alphabet}$ with $|\str| \geq \strlen_0$, $\precisionFun{\str} \leq C\log_2{|\str|}$,
    \item \defn{linearly bounded precision} if $\precisionFun{\str} = \bigO{|\str|}$, i.e., if there exist $\strlen_0 \in\N$ and $C\in\R$ such that for all $\str \in \kleene{\alphabet}$ with $|\str| \geq \strlen_0$, $\precisionFun{\str} \leq C|\str|$, and
    \item \defn{unbounded precision} if $\precisionFun{\str}$ cannot be bounded by a function of $|\str|$.
\end{itemize}
\end{definition}
The constructions in the existing literature range from constant to unbounded precision.
The RNNs considered by \citet{svete2023recurrent}, encoding of deterministic \pfsaAcr{}s, for example, are of constant precision.
\citet{weiss-etal-2018-practical,merrill-2019-sequential,merrill-etal-2020-formal}, etc., consider models with logarithmically bounded precision.
In contrast, \citet{Siegelmann1992OnTC} and \citet{nowak2023representational} require unbounded precision to be able to represent possibly infinite running times required for the Turing completeness of RNNs.

\section{Exact Simulation} \label{sec:results}
This section presents our results on the exact simulation (that is, weak equivalence) of \fslmAcr{}s with both sparsemax as well as softmax-normalized Elman LMs. 
The results rely on \cref{thm:construction}, which provides the basis for all our subsequent results by showing that Elman RNNs with linear precision can compute the state--string distributions $\pLMA\left(\stateq, \str\right)$ (cf. \Cref{eq:next-state}) of any \pfsaAcr{} $\wfsa$.

\paragraph{A note on notation.} 
Throughout the paper, we implicitly index vectors and matrices directly with symbols for conciseness. 
Concretely, we assume an implicit ordering of the symbols and a corresponding ordering of the vector and matrix entries.
This ordering is effected through a bijection $\ordering\colon \alphabet \to \NTo{\nsymbols}$ (or any other set in place of $\alphabet$).
We then define, for $\vx \in \R^{\nsymbols}$, $\vx_\sym \defeq \vx_{\ordering\left(\sym\right)}$ and, for $\mX \in \R^{\nsymbols \times \nsymbols}$, $\mX_{\sym, \sym'} \defeq \mX_{\ordering\left(\sym\right), \ordering\left(\sym'\right)}$ for $\sym, \sym' \in \alphabet$.

\begin{restatable}{reTheorem}{construction} \label{thm:construction}
Let $\automaton$ be a \pfsaAcr{} with the state--string distribution $\pLMA\left(\stateq, \strlet\right)$ (cf. \Cref{eq:next-state}).
Then, there exists an Elman RNN $\rnn$ with linearly bounded precision such that for all $\stateq \in \states$ and all $\strlet = \sym_1 \ldots \symt \in \kleene{\alphabet}$
\begin{equation} \label{eq:construction-invariance}
    \hiddState(\strlet)_{\left(\stateq, \sym\right)} = \ind{\sym = \symt} \pLMA\left(\stateq, \strlt\right).
\end{equation}
\end{restatable}

\begin{proof}[Proof intuition.]
    The proof translates the computation of the state--string probabilities under the \pfsaAcr{} (cf. \cref{eq:next-state}) into the recurrence of the Elman RNN.
    See \cref{app:construction} for the details.
\end{proof}

\subsection{Sparsemax-normalized Elman LMs} \label{sec:ernn-sparsemax-pfsa}
We now present the first result on the representational capacity of Elman LMs of linearly bounded precision: A lower bound showing that they can implement any \fslmAcr{}, including those induced by non-deterministic \pfsaAcr{}s.
This is formally captured by the following corollary of \cref{thm:construction}.

\begin{restatable}{reCorollary}{sparsemaxExactSimulation} \label{thm:sparsemax}
Let $\automaton = \wfsatuple$ be a trim \pfsaAcr{} inducing the tight LM $\pLMA$.
Then, there exists a weakly equivalent, sparsemax-normalized non-linear output Elman LM.
\end{restatable} 
\begin{proof}[Proof intuition.]
Considering \cref{eq:pfsa-autoregressive}, $\hiddStatetminus$ contains all the information needed to compute $\pLMA\left(\eossym_\tstep \mid \strlt\right)$.
In accordance with \cref{eq:pfsa-autoregressive-last}, the hidden state is $\norm{\cdot}_1$-normalized and then multiplied by the output matrix $\outMtx$ containing the values $\pLMA\left(\eossym\mid\stateq\right)$.
See \cref{sec:exact-cor-proofs} for the full derivation.
\end{proof}

\subsection{Softmax-normalized Elman LMs}\label{sec:aernn-softmax-pfsa}

\Cref{thm:sparsemax} shows that sparsemax-normalized Elman LMs can simulate any non-deterministic \pfsaAcr{} after normalizing the hidden states to unit $\norm{\cdot}_1$ norm.
The sparsemax function, which is used in place of softmax, constitutes a crucial part of our construction. 
However, it is more standard to use the softmax in a practical implementation of an Elman RNN, but the softmax does not, unlike the sparsemax, equal the identity function on $\Simplexnminus$; $\softmax\left(\vx\right) \neq \vx$ for $\vx \in \Simplexnminus$.
Luckily, by using a different output function $\mlp$, we can achieve weak equivalence using the softmax projection function as well.
\begin{restatable}{reCorollary}{aernnExactSimulation} \label{thm:softmax}
Let $\automaton = \wfsatuple$ be a trim \pfsaAcr{} inducing the tight LM $\pLMA$.
Then, there exists a weakly equivalent, softmax-normalized non-linear output Elman LM with an $\Rex$-valued $\mlp$, where $\Rex \defeq \R \cup \set{-\infty, \infty}$ are the extended reals.
\end{restatable} 
\begin{proof}[Proof intuition.]
The idea behind this construction is similar to that of \cref{thm:sparsemax}.
Rather than the explicit $\norm{\cdot}_1$-normalization, however, this construction relies on an ($-\infty$-augmented) $\log$ transformation of the hidden state before the softmax normalization, which computes the appropriate probabilities from \cref{eq:pfsa-autoregressive}.
See \cref{sec:exact-cor-proofs} for the details.
\end{proof}

\section{Approximate Simulation} \label{sec:approximation}
\cref{sec:results} concerns itself with the exact simulation of \fslmAcr{}s with Elman LMs.
The softmax-normalized result from \cref{thm:softmax}, particularly, shows that a \aernnAcr{} with an extended-real-valued function $\mlp$ and a softmax projection function can perfectly simulate any \pfsaAcr{}.
The construction, however, relies on the use of the extended reals.
Restricting the output function $\mlp$ to the $\R$ induces approximation errors due to the full support of the softmax on real vectors.
To bring the results on the representational capacity of Elman LM even closer to practical implementations, we now discuss how an $\R$-vector-valued output function $\mlp$ can be used to \emph{approximate} \fslmAcr{}s with Elman RNNs with vanishingly small approximation error. 
We first consider $\mlp$ implemented by a custom non-linear function (the $\log$) and later as a common multi-layer perceptron.
We use total variation distance to measure the approximation error between two language models:
\begin{definition}
    Let $\pLM$ and $\qLM$ be two probability distributions over $\kleene{\alphabet}$.
    Then the \defn{total variation distance} between $\pLM$ and $\qLM$ is given by
    \begin{equation}    
        \tvd{\pLM}{\qLM}\defeq\frac{1}{2}\sum_{\str\in\kleene{\alphabet}}|\pLM(\str)-\qLM(\str)|.
    \end{equation}
\end{definition}

\subsection{Approximation with \texorpdfstring{$\log$}{log}} \label{sec:approximation-log}
We first show how \fslmAcr{}s can be approximated with $\R$-valued output functions.
The following theorem states that the $-\infty$ from \cref{thm:sparsemax} can be avoided---all outputs can be $\R$-valued---if we allow for an (arbitrarily small) approximation error.
\begin{restatable}{reTheorem}{aernnApproximateSimulation}\label{thm:aernn-approx-simulation}
    For any \pfsaAcr{} $\automaton$ with induced LM $\pLMA$ and any $\epsilon>0$, there exists a $\ReLU$-activated \aernnAcr{} $\pLM_\rnn$ with an $\R$-vector-valued function $\mlp\colon\R^\hiddDim\to\R^\hiddDim$ such that $\tvd{\pLMA}{\pLM_\rnn}< \epsilon$.
\end{restatable}
\begin{proof}[Proof intuition.]
The proof relies on approximating the arbitrary \pfsaAcr{} $\wfsa$ with a \emph{full support} \pfsaAcr{} $\wfsa'$ which approximates the LM induced by $\wfsa$ arbitrarily well in terms of the total variation distance.
Since $\wfsa'$ has full support, it can be represented exactly by a softmax-normalized Elman RNN with the same mechanism as in \cref{thm:softmax}, resulting in a recurrent LM approximating the original \pfsaAcr{} $\wfsa$.
See \cref{app:aernn-approximation-proof} for the full proof.
\end{proof}
This provides an approximation analog to \cref{thm:sparsemax}.
Due to the absence of infinities, this result is more representative of practical implementations.
Nevertheless, the use of the $\log$ output function is non-standard.
We address this in the next section.

\subsection{Approximation with \texorpdfstring{\citet{Pinkus_1999}}{Pinkus, (1999)}}\label{sec:approximation-results}
\cref{sec:approximation-log} brings the results from \cref{sec:results} to a more practical setting by considering the approximation of string probabilities.
The usage of the non-standard $\log$ transformation, however, is still unsatisfactory. 
In this section, we extend the result from the previous section to the most practically relevant setting, in which we consider the approximation of string probabilities under a \fslmAcr{} using an Elman LM with an output function in the form of an MLP.
Since such models are a common practice in modern LMs \citep{pascanu2014construct}, we consider this to be the most impactful result of our investigation.

Concretely, we show that the approximation abilities of $\ReLU$-activated deep neural networks allow us to approximate the conditional probabilities of a \pfsaAcr{} arbitrarily well, providing theoretical backing for the strong performance of softmax-normalized \ernnAcr{}s on regular languages.\footnote{Our construction applies an approximation result by \citet{Pinkus_1999}, which characterizes wide fixed-depth neural networks. An analogous result could naturally be stated for fixed-width deep $\ReLU$ networks by applying any of the approximation theorems that cover such cases \citep[e.g.,][]{10.5555/2969033.2969153,YAROTSKY2017103,pmlr-v70-raghu17a,Hanin_2019}.}\looseness=-1
\begin{restatable}{reTheorem}{dornnApproximateSimulation} \label{thm:dornn-approximation-proof}
    Let $\pLMA$ be a \fslmAcr{} induced by a \pfsaAcr{} $\automaton$ and let $\epsilon > 0$.
   Then, there exists a $\ReLU$-activated, softmax-normalized \dornnAcr{} $\pLMR$ such that 
    $\tvd{\pLMA}{\pLMR}<\epsilon$.
\end{restatable}
\begin{proof}[Proof intuition]
The final construction relies on approximating two functions: 
\begin{enumerate*}[label=\textit{(\arabic*)}]
    \item the LM induced by the arbitrary $\wfsa$ with a full support \pfsaAcr{} $\wfsa'$ and
    \item the $\log$ output function.
\end{enumerate*}
\cref{thm:aernn-approx-simulation} solves step \textit{(1)} while the application of \cref{thm:pinkus} to the approximation of $\log$ solves step \textit{(2)}.
Importantly, the full support assumption provides the necessary compact domain over which $\log$ is approximated.
See \cref{app:aernn-approximation-proof} for the full proof.
\end{proof}

\section{Discussion}
\cref{sec:results,sec:approximation} present a series of results describing the probabilistic representational capacity of Elman LMs. 
The results range from theorems that are simple, but detached from practical implementations (\cref{thm:sparsemax,thm:softmax}), to more elaborate theorems that describe the \emph{approximation} abilities of Elman LMs with modeling choices very close to those used in practice (\cref{thm:aernn-approx-simulation}).
All results lower-bound the probabilistic representational capacity of Elman LMs, i.e., they state what Elman LM \emph{can} do.
We discuss the implications of these lower bounds here.\looseness=-1

\paragraph{Approximation abilities of softmax-normalized Elman LMs.}

The most practically relevant result in our paper is the approximation result of \cref{thm:dornn-approximation-proof}.
Indeed, other results in this paper can be seen as a step-by-step build-up, each stripping away unpractical assumptions leading to this final approximation result.
\cref{thm:dornn-approximation-proof} shows that Elman LMs can approximate not only the individual conditional probabilities but rather any tight \fslmAcr{} arbitrarily well as measured by total variation distance.
Note that while arbitrarily good approximation of the \emph{conditional} probabilities does not necessarily imply the arbitrarily good approximation of full string probabilities (string probabilities are computed by multiplying the conditional probabilities of the individual symbols arbitrarily many times, resulting in a potentially large error), the assumption of tightness nevertheless allows us to make the total variation arbitrarily small---since the probability mass of sufficiently long string under a tight language model diminishes with the string length, the strings with accumulating error only contribute marginally to the total variation distance between the true \fslmAcr{} and its neural approximation.

\paragraph{Non-determinism and neural LMs.}
One of the main motivations for this investigation was capturing the non-determinism of \fslmAcr{}s with neural LMs.
Non-determinism is a prevalent notion in formal language theory as it allows both for more expressive formalisms as well as a more concise and intuitive construction of formal models of computation \citep[Ch. 2]{hopcroft2001introduction}.
However, existing work has not addressed the implications of non-determinism on the link between \fslmAcr{}s and neural LMs.
To resolve this, the constructions presented here propose direct ways of implementing non-deterministic \pfsaAcr{}s with Elman LMs by using linearly increasing the precision of hidden states, which is able to model all possible decisions of the \pfsaAcr{} directly.
This allows us to lower-bound the representational capacity of real-time Elman LMs (cf. \cref{thm:sparsemax}). This lower bound implies Elman LMs with linearly bounded precision are strictly more expressive than their counterparts with finite precision, which are only able to express \emph{deterministic} \pfsaAcr{}s \citep{svete2023recurrent}.
To the best of our knowledge, ours is the first connection that tackles non-determinism directly by emulating the computation of string probabilities in real time.\footnote{Alternatively, one could define neural LMs that simulate all accepting paths individually, which would necessitate the introduction of some form of non-determinism into the neural architecture \citep{nowak2023representational,nowak-etal-2024-computational}.}
Moreover, addressing non-determinism also allows us to \emph{compress} otherwise determinizable automata into weakly equivalent non-deterministic ones and thus implement them in possibly exponentially smaller space \citep{Buchsbaum1998} by directly simulating the smaller non-deterministic automaton with an Elman LM, a substantial improvement over existing work.\footnote{While such an LM would still require a hidden state linear in the number of states (like in the deterministic case), the number of states of the non-deterministic automaton would be smaller.\looseness=-1}

\paragraph{On the role of real-time processing.}
We limit ourselves to the real-time processing of strings by Elman LMs, as this most realistically captures how most modern LMs work \citep{weiss-etal-2018-practical}.
Moreover, real-time Elman LMs are not \emph{upper}-bounded by \fslmAcr{}s.
Using the same stack encoding function as \citet{nowak2023representational}, one can show that Elman LMs with linearly bounded precision can probabilistically generate a weighting of a context-free Dyck language of nested parentheses because simulating the required stack can be performed in real-time.\footnote{Note that all (probabilistic) context-free languages can be recognized in real-time \citep{greibachNF,abney-etal-1999-relating}.}
Moreover, by simulating multiple stacks, Elman LMs can simulate real-time Turing machines with arbitrarily many tapes \citep{nowak2023representational}, resulting in computational power exceeding that of \pfsaAcr{}s \citep{rabin1963realtime, rosenberg-1967-realtime}.
Going beyond real-time processing, a classical result by \citet{Siegelmann1992OnTC}, and extended to the probabilistic case by \citet{nowak2023representational}, establishes that RNNs with unbounded precision can simulate Turing machines.
However, the requirement of unbounded precision (and thereby computation time) is a departure from how RNNs function in practice.
\citet{nowak2023representational} place upper and lower bounds on the types of LMs $\ReLU$-activated Elman RNNs can represent.
Our real-time results stand in contrast to such Turing completeness theorems.\looseness=-1

\paragraph{Space complexity of emulating \fslmAcr{}s.}
Our constructions show that hidden states of size $\bigO{\nstates \nsymbols}$ are sufficient to simulate arbitrary \fslmAcr{}s.
While we do not address the \emph{lower bounds} on the space required for the simulation of \fslmAcr{}s, we conjecture that existing lower bounds for deterministic \pfsaAcr{}s apply here \citep[Theorems 5.1 and 5.2]{svete2023recurrent}.
Concretely, the size of the Elman LM is lower-bounded by $\min\left(\nstates, \eosnsymbols\right)$ due to the \defn{softmax bottleneck} principle \citep{yang2018breaking,svete2024theoretical,borenstein-etal-2024-what}. 
Because the \pfsaAcr{} inducing a \fslmAcr{} can, in general, define $\nstates$ probability distributions over $\eosalphabet$, a weakly equivalent Elman LM needs to model a $\Omega\left(\min\left(\nstates, \eosnsymbols\right)\right)$-dimensional space of logits to match the conditional probability distributions. 
This, in general, requires $\Omega\left(\min\left(\nstates, \eosnsymbols\right)\right)$-dimensional hidden states \citep{chang-mccallum-2022-softmax}.

\paragraph{Probabilistic Regular Languages.}
The relationship between RNNs and FSAs has attracted a lot of attention, resulting in a well-understood connection between RNNs and binary regular languages
\citep[]{Kleene1956, 10.1162/neco.1989.1.3.372, DBLP:journals/corr/abs-1906-06349, merrill-2019-sequential, merrill-etal-2020-formal} as well as between RNNs and \emph{deterministic} probabilistic regular languages \citep{svete2023recurrent}.
To the best of our knowledge, \citet{peng-etal-2018-rational} are the first to discuss recurrent neural models that can simulate the computation of string probabilities under a set of distinct \pfsaAcr{}s.
In particular, each dimension of the network’s hidden state represents the probability of the input string under a specific \pfsaAcr.
Their construction, however, does not consider the \emph{lower bound} of representational capacity of standard RNN architectures such as the Elman RNN, and, importantly, despite being weighted, is not connected to the language modeling setting.
While some aspects of our construction resemble theirs, we notably extend it to the language modeling setting, where the generalization to the probabilistic case \citep{icard-2020-calibrating} is especially relevant.\looseness=-1

\section{Conclusion}
We study a practically relevant formulation of Elman LMs with real-time processing and linear precision with respect to the string length and describe its probabilistic representational capacity in terms of \fslmAcr{}s.
Concretely, we show that Elman LMs with non-standard normalization and activation functions can simulate arbitrary non-deterministic \pfsaAcr{}s (\cref{thm:sparsemax,thm:softmax}).
We then extend these exact simulation results to \emph{approximation-based} ones, where we show that Elman LMs as used in practice can approximate \fslmAcr{}s arbitrarily well as measured by total variation distance (\cref{thm:dornn-approximation-proof}).
The intuitive constructions provide a better understanding of the representational capacity of Elman LMs and showcase the probabilistic representational capacity of Elman LMs in a new light.\looseness=-1

\section*{Limitations}
Since we only present lower bounds on the representational capacity of Elman LMs and no upper bounds, we do not provide a complete picture of their representational capacity.
We also do not consider the \emph{learnability} of \fslmAcr{}s with Elman LMs.
This crucial part of language modeling is a much harder problem that remains to be tackled.
However, many existing results from formal language theory show that learning \fslmAcr{}s based on only positive examples (as done in modern approaches to language modeling) is hard or even impossible \citep{Gold1967LanguageII, kearns-1994-cryptographic}.
The same naturally holds for the more expressive models of computation that Elman LMs can also emulate.
This puts hard bounds on what can be learned in the language modeling framework from data alone, irrespective of the concrete parameterization of the model---be it RNNs, transformers, or any other model.

The restriction to the consideration of \fslmAcr{}s of course fails to consider the non-regular phenomena of human language \citep{Chomsky57a}.
While limits of human cognition might suggest that large portions of human language can be modeled by regular mechanisms \citep{hewitt-etal-2020-rnns,svete2024theoretical}, such formalisms lack the structure and interpretability of some mechanisms higher on the Chomsky hierarchy.
However, the possibility of simulating many other real-time models of computation suggests that linearly bounded precision Elman LMs might indeed be able to capture more than just regular aspects of human language.

\section*{Ethics Statement}
The paper advances our theoretical understanding of language models.
The authors do not foresee any ethical implications of this paper.

\section*{Acknowledgements}
Ryan Cotterell acknowledges support from the Swiss National Science Foundation (SNSF) as part of the ``The Forgotten Role of Inductive Bias in Interpretability'' project.
Anej Svete is supported by the ETH AI Center Doctoral Fellowship.

\bibliography{anthology,custom}

\begin{thebibliography}{55}
\expandafter\ifx\csname natexlab\endcsname\relax\def\natexlab#1{#1}\fi

\bibitem[{Abney et~al.(1999)Abney, McAllester, and Pereira}]{abney-etal-1999-relating}
Steven Abney, David McAllester, and Fernando Pereira. 1999.
\newblock \href {https://doi.org/10.3115/1034678.1034759} {Relating probabilistic grammars and automata}.
\newblock In \emph{Proceedings of the 37th Annual Meeting of the Association for Computational Linguistics}, pages 542--549, College Park, Maryland, USA. Association for Computational Linguistics.

\bibitem[{Ackerman and Cybenko(2020)}]{ackerman2020survey}
Joshua Ackerman and George Cybenko. 2020.
\newblock \href {http://arxiv.org/abs/2006.01338} {A survey of neural networks and formal languages}.
\newblock \emph{arXiv preprint arXiv:2006.01338}.

\bibitem[{Albalak et~al.(2022)Albalak, Tuan, Jandaghi, Pryor, Yoffe, Ramachandran, Getoor, Pujara, and Wang}]{albalak-etal-2022-feta}
Alon Albalak, Yi-Lin Tuan, Pegah Jandaghi, Connor Pryor, Luke Yoffe, Deepak Ramachandran, Lise Getoor, Jay Pujara, and William~Yang Wang. 2022.
\newblock \href {https://doi.org/10.18653/v1/2022.emnlp-main.751} {{FETA}: A benchmark for few-sample task transfer in open-domain dialogue}.
\newblock In \emph{Proceedings of the 2022 Conference on Empirical Methods in Natural Language Processing}, pages 10936--10953, Abu Dhabi, United Arab Emirates. Association for Computational Linguistics.

\bibitem[{Allauzen and Mohri(2003)}]{Allauzen2003EfficientAF}
Cyril Allauzen and Mehryar Mohri. 2003.
\newblock \href {https://dl.acm.org/doi/abs/10.5555/873977.873979} {Efficient algorithms for testing the twins property}.
\newblock \emph{Journal of Automata, Languages and Combinatorics}, 8:117--144.

\bibitem[{Alon et~al.(1991)Alon, Dewdney, and Ott}]{noga-1991-efficient}
Noga Alon, A.~K. Dewdney, and Teunis~J. Ott. 1991.
\newblock \href {https://doi.org/10.1145/103516.103523} {Efficient simulation of finite automata by neural nets}.
\newblock \emph{Journal of the ACM}, 38(2):495–514.

\bibitem[{Borenstein et~al.(2024)Borenstein, Svete, Chan, Valvoda, Nowak, Augenstein, Chodroff, and Cotterell}]{borenstein-etal-2024-what}
Nadav Borenstein, Anej Svete, Robin Shing~Moon Chan, Josef Valvoda, Franz Nowak, Isabelle Augenstein, Eleanor Chodroff, and Ryan Cotterell. 2024.
\newblock What languages are easy to language-model? a perspective from learning probabilistic regular languages.
\newblock In \emph{Proceedings of the 62nd Annual Meeting of the Association for Computational Linguistics (Volume 1: Long Papers)}, Bangkok, Thailand. Association for Computational Linguistics.

\bibitem[{Boyd and Vandenberghe(2004)}]{boyd2004convex}
Stephen Boyd and Lieven Vandenberghe. 2004.
\newblock \href {http://www.amazon.com/exec/obidos/redirect?tag=citeulike-20\&path=ASIN/0521833787} {\emph{Convex Optimization}}.
\newblock {Cambridge University Press}.

\bibitem[{Buchsbaum et~al.(2000)Buchsbaum, Giancarlo, and Westbrook}]{Buchsbaum1998}
Adam~L. Buchsbaum, Raffaele Giancarlo, and Jeffery~R. Westbrook. 2000.
\newblock \href {https://doi.org/10.1137/S0097539798346676} {On the determinization of weighted finite automata}.
\newblock \emph{SIAM Journal on Computing}, 30(5):1502--1531.

\bibitem[{Chang and McCallum(2022)}]{chang-mccallum-2022-softmax}
Haw-Shiuan Chang and Andrew McCallum. 2022.
\newblock \href {https://doi.org/10.18653/v1/2022.acl-long.554} {Softmax bottleneck makes language models unable to represent multi-mode word distributions}.
\newblock In \emph{Proceedings of the 60th Annual Meeting of the Association for Computational Linguistics (Volume 1: Long Papers)}, pages 8048--8073, Dublin, Ireland. Association for Computational Linguistics.

\bibitem[{Cho et~al.(2014{\natexlab{a}})Cho, van Merri{\"e}nboer, Bahdanau, and Bengio}]{cho-etal-2014-properties}
Kyunghyun Cho, Bart van Merri{\"e}nboer, Dzmitry Bahdanau, and Yoshua Bengio. 2014{\natexlab{a}}.
\newblock \href {https://doi.org/10.3115/v1/W14-4012} {On the properties of neural machine translation: Encoder{--}decoder approaches}.
\newblock In \emph{Proceedings of {SSST}-8, Eighth Workshop on Syntax, Semantics and Structure in Statistical Translation}, pages 103--111, Doha, Qatar. Association for Computational Linguistics.

\bibitem[{Cho et~al.(2014{\natexlab{b}})Cho, van Merri{\"e}nboer, Gulcehre, Bahdanau, Bougares, Schwenk, and Bengio}]{cho-etal-2014-learning}
Kyunghyun Cho, Bart van Merri{\"e}nboer, Caglar Gulcehre, Dzmitry Bahdanau, Fethi Bougares, Holger Schwenk, and Yoshua Bengio. 2014{\natexlab{b}}.
\newblock \href {https://doi.org/10.3115/v1/D14-1179} {Learning phrase representations using {RNN} encoder{--}decoder for statistical machine translation}.
\newblock In \emph{Proceedings of the 2014 Conference on Empirical Methods in Natural Language Processing ({EMNLP})}, pages 1724--1734, Doha, Qatar. Association for Computational Linguistics.

\bibitem[{Chomsky(1957)}]{Chomsky57a}
Noam Chomsky. 1957.
\newblock \href {https://doi.org/doi:10.1515/9783112316009} {\emph{Syntactic Structures}}.
\newblock De Gruyter Mouton, Berlin, Boston.

\bibitem[{Cleeremans et~al.(1989)Cleeremans, Servan-Schreiber, and McClelland}]{10.1162/neco.1989.1.3.372}
Axel Cleeremans, David Servan-Schreiber, and James~L. McClelland. 1989.
\newblock \href {https://doi.org/10.1162/neco.1989.1.3.372} {Finite state automata and simple recurrent networks}.
\newblock \emph{Neural Computation}, 1(3):372--381.

\bibitem[{Cybenko(1989)}]{cybenko_approximation_1989}
G.~Cybenko. 1989.
\newblock \href {https://doi.org/10.1007/BF02551274} {Approximation by superpositions of a sigmoidal function}.
\newblock \emph{Mathematics of Control, Signals and Systems}, 2(4):303--314.

\bibitem[{Du et~al.(2023)Du, Torroba~Hennigen, Pimentel, Meister, Eisner, and Cotterell}]{du-etal-2023-measure}
Li~Du, Lucas Torroba~Hennigen, Tiago Pimentel, Clara Meister, Jason Eisner, and Ryan Cotterell. 2023.
\newblock \href {https://doi.org/10.18653/v1/2023.acl-long.543} {A measure-theoretic characterization of tight language models}.
\newblock In \emph{Proceedings of the 61st Annual Meeting of the Association for Computational Linguistics (Volume 1: Long Papers)}, pages 9744--9770, Toronto, Canada. Association for Computational Linguistics.

\bibitem[{Elman(1990)}]{Elman1990}
Jeffrey~L. Elman. 1990.
\newblock \href {https://doi.org/https://doi.org/10.1207/s15516709cog1402\_1} {Finding structure in time}.
\newblock \emph{Cognitive Science}, 14(2):179--211.

\bibitem[{Funahashi(1989)}]{funahashi_approximate_1989}
Ken-Ichi Funahashi. 1989.
\newblock \href {https://doi.org/https://doi.org/10.1016/0893-6080(89)90003-8} {On the approximate realization of continuous mappings by neural networks}.
\newblock \emph{Neural Networks}, 2(3):183--192.

\bibitem[{Gao and Pavel(2018)}]{gao2018properties}
Bolin Gao and Lacra Pavel. 2018.
\newblock \href {http://arxiv.org/abs/1704.00805} {On the properties of the softmax function with application in game theory and reinforcement learning}.
\newblock \emph{arXiv preprint arXiv:1704.00805}.

\bibitem[{Gold(1967)}]{Gold1967LanguageII}
E~Mark Gold. 1967.
\newblock \href {https://doi.org/https://doi.org/10.1016/S0019-9958(67)91165-5} {Language identification in the limit}.
\newblock \emph{Information and Control}, 10(5):447--474.

\bibitem[{Goodfellow et~al.(2016)Goodfellow, Bengio, and Courville}]{Goodfellow-et-al-2016}
Ian Goodfellow, Yoshua Bengio, and Aaron Courville. 2016.
\newblock \emph{Deep Learning}.
\newblock MIT Press.
\newblock \url{http://www.deeplearningbook.org}.

\bibitem[{Greibach(1965)}]{greibachNF}
Sheila~A. Greibach. 1965.
\newblock \href {https://doi.org/10.1145/321250.321254} {A new normal-form theorem for context-free phrase structure grammars}.
\newblock \emph{J. ACM}, 12(1):42–52.

\bibitem[{Hanin(2019)}]{Hanin_2019}
Boris Hanin. 2019.
\newblock \href {https://doi.org/10.3390/math7100992} {Universal function approximation by deep neural nets with bounded width and relu activations}.
\newblock \emph{Mathematics}, 7(10):992.

\bibitem[{Hewitt et~al.(2020)Hewitt, Hahn, Ganguli, Liang, and Manning}]{hewitt-etal-2020-rnns}
John Hewitt, Michael Hahn, Surya Ganguli, Percy Liang, and Christopher~D. Manning. 2020.
\newblock \href {https://doi.org/10.18653/v1/2020.emnlp-main.156} {{RNN}s can generate bounded hierarchical languages with optimal memory}.
\newblock In \emph{Proceedings of the 2020 Conference on Empirical Methods in Natural Language Processing (EMNLP)}, pages 1978--2010, Online. Association for Computational Linguistics.

\bibitem[{Hochreiter and Schmidhuber(1997)}]{10.1162/neco.1997.9.8.1735}
Sepp Hochreiter and Jürgen Schmidhuber. 1997.
\newblock \href {https://doi.org/10.1162/neco.1997.9.8.1735} {Long short-term memory}.
\newblock \emph{Neural Computation}, 9(8):1735--1780.

\bibitem[{Hopcroft et~al.(2006)Hopcroft, Motwani, and Ullman}]{hopcroft2001introduction}
John~E. Hopcroft, Rajeev Motwani, and Jeffrey~D. Ullman. 2006.
\newblock \href {https://dl.acm.org/doi/book/10.5555/1196416} {\emph{Introduction to Automata Theory, Languages, and Computation (3rd Edition)}}.
\newblock Addison-Wesley Longman Publishing Co., Inc., USA.

\bibitem[{Hornik et~al.(1989)Hornik, Stinchcombe, and White}]{hornik_multilayer_1989}
Kurt Hornik, Maxwell Stinchcombe, and Halbert White. 1989.
\newblock \href {https://doi.org/https://doi.org/10.1016/0893-6080(89)90020-8} {Multilayer feedforward networks are universal approximators}.
\newblock \emph{Neural Networks}, 2(5):359--366.

\bibitem[{Icard(2020)}]{icard-2020-calibrating}
Thomas~F. Icard. 2020.
\newblock \href {https://doi.org/https://doi.org/10.1016/j.jmp.2019.102308} {Calibrating generative models: {The} probabilistic {C}homsky--{S}chützenberger hierarchy}.
\newblock \emph{Journal of Mathematical Psychology}, 95:102308.

\bibitem[{Indyk(1995)}]{indyk-1995-optimal}
P.~Indyk. 1995.
\newblock \href {https://link.springer.com/content/pdf/10.1007/3-540-59042-0_85.pdf} {Optimal simulation of automata by neural nets}.
\newblock In \emph{STACS 95}, pages 337--348, Berlin, Heidelberg. Springer Berlin Heidelberg.

\bibitem[{Kearns and Valiant(1994)}]{kearns-1994-cryptographic}
Michael Kearns and Leslie Valiant. 1994.
\newblock \href {https://doi.org/10.1145/174644.174647} {Cryptographic limitations on learning boolean formulae and finite automata}.
\newblock \emph{Journal of the ACM}, 41(1):67–95.

\bibitem[{Kleene(1956)}]{Kleene1956}
S.~C. Kleene. 1956.
\newblock \href {https://doi.org/doi:10.1515/9781400882618-002} {Representation of events in nerve nets and finite automata}.
\newblock In C.~E. Shannon and J.~McCarthy, editors, \emph{Automata Studies. (AM-34), Volume 34}, pages 3--42. Princeton University Press, Princeton.

\bibitem[{Korsky and Berwick(2019)}]{DBLP:journals/corr/abs-1906-06349}
Samuel~A. Korsky and Robert~C. Berwick. 2019.
\newblock \href {http://arxiv.org/abs/1906.06349} {On the computational power of {RNNs}}.
\newblock \emph{arXiv preprint arXiv:1906.06349}.

\bibitem[{Martins and Astudillo(2016)}]{sparsemax}
Andr\'{e} F.~T. Martins and Ram\'{o}n~F. Astudillo. 2016.
\newblock \href {https://proceedings.mlr.press/v48/martins16} {From softmax to sparsemax: {A} sparse model of attention and multi-label classification}.
\newblock In \emph{Proceedings of the 33rd International Conference on International Conference on Machine Learning - Volume 48}, ICML'16, page 1614–1623.

\bibitem[{Merrill(2019)}]{merrill-2019-sequential}
William Merrill. 2019.
\newblock \href {https://doi.org/10.18653/v1/W19-3901} {Sequential neural networks as automata}.
\newblock In \emph{Proceedings of the Workshop on Deep Learning and Formal Languages: Building Bridges}, pages 1--13, Florence. Association for Computational Linguistics.

\bibitem[{Merrill et~al.(2022)Merrill, Sabharwal, and Smith}]{merrill-etal-2022-saturated}
William Merrill, Ashish Sabharwal, and Noah~A. Smith. 2022.
\newblock \href {https://doi.org/10.1162/tacl_a_00493} {Saturated transformers are constant-depth threshold circuits}.
\newblock \emph{Transactions of the Association for Computational Linguistics}, 10:843--856.

\bibitem[{Merrill and Tsilivis(2022)}]{merrill2022extracting}
William Merrill and Nikolaos Tsilivis. 2022.
\newblock \href {http://arxiv.org/abs/2201.12451} {Extracting finite automata from {RNNs} using state merging}.
\newblock \emph{arXiv preprint arXiv:2201.12451}.

\bibitem[{Merrill et~al.(2020)Merrill, Weiss, Goldberg, Schwartz, Smith, and Yahav}]{merrill-etal-2020-formal}
William Merrill, Gail Weiss, Yoav Goldberg, Roy Schwartz, Noah~A. Smith, and Eran Yahav. 2020.
\newblock \href {https://doi.org/10.18653/v1/2020.acl-main.43} {A formal hierarchy of {RNN} architectures}.
\newblock In \emph{Proceedings of the 58th Annual Meeting of the Association for Computational Linguistics}, pages 443--459, Online. Association for Computational Linguistics.

\bibitem[{Mohri(1997)}]{mohri-1997-finite}
Mehryar Mohri. 1997.
\newblock \href {https://aclanthology.org/J97-2003} {Finite-state transducers in language and speech processing}.
\newblock \emph{Computational Linguistics}, 23(2):269--311.

\bibitem[{Mont\'{u}far et~al.(2014)Mont\'{u}far, Pascanu, Cho, and Bengio}]{10.5555/2969033.2969153}
Guido Mont\'{u}far, Razvan Pascanu, Kyunghyun Cho, and Yoshua Bengio. 2014.
\newblock On the number of linear regions of deep neural networks.
\newblock In \emph{Proceedings of the 27th International Conference on Neural Information Processing Systems - Volume 2}, NIPS'14, page 2924–2932, Cambridge, MA, USA. MIT Press.

\bibitem[{Nowak et~al.(2024)Nowak, Svete, Butoi, and Cotterell}]{nowak-etal-2024-computational}
Franz Nowak, Anej Svete, Alexandra Butoi, and Ryan Cotterell. 2024.
\newblock On the representational capacity of neural language models with chain-of-thought reasoning.
\newblock In \emph{Proceedings of the 62nd Annual Meeting of the Association for Computational Linguistics (Volume 1: Long Papers)}, Bangkok, Thailand. Association for Computational Linguistics.

\bibitem[{Nowak et~al.(2023)Nowak, Svete, Du, and Cotterell}]{nowak2023representational}
Franz Nowak, Anej Svete, Li~Du, and Ryan Cotterell. 2023.
\newblock \href {http://arxiv.org/abs/2310.12942} {On the representational capacity of recurrent neural language models}.
\newblock \emph{arXiv preprint arXiv:2310.12942}.

\bibitem[{Pascanu et~al.(2014)Pascanu, Gulcehre, Cho, and Bengio}]{pascanu2014construct}
Razvan Pascanu, Caglar Gulcehre, Kyunghyun Cho, and Yoshua Bengio. 2014.
\newblock \href {https://arxiv.org/pdf/1312.6026.pdf} {How to construct deep recurrent neural networks: Proceedings of the second international conference on learning representations (iclr 2014)}.
\newblock In \emph{2nd International Conference on Learning Representations, ICLR 2014}.

\bibitem[{Peng et~al.(2018)Peng, Schwartz, Thomson, and Smith}]{peng-etal-2018-rational}
Hao Peng, Roy Schwartz, Sam Thomson, and Noah~A. Smith. 2018.
\newblock \href {https://doi.org/10.18653/v1/D18-1152} {Rational recurrences}.
\newblock In \emph{Proceedings of the 2018 Conference on Empirical Methods in Natural Language Processing}, pages 1203--1214, Brussels, Belgium. Association for Computational Linguistics.

\bibitem[{Pinkus(1999)}]{Pinkus_1999}
Allan Pinkus. 1999.
\newblock \href {https://doi.org/10.1017/S0962492900002919} {Approximation theory of the {MLP} model in neural networks}.
\newblock \emph{Acta Numerica}, 8:143–195.

\bibitem[{Pérez et~al.(2021)Pérez, Barceló, and Marinkovic}]{perez-etal-2021-attention}
Jorge Pérez, Pablo Barceló, and Javier Marinkovic. 2021.
\newblock \href {http://jmlr.org/papers/v22/20-302.html} {Attention is {T}uring-complete}.
\newblock \emph{Journal of Machine Learning Research}, 22(75):1--35.

\bibitem[{Rabin(1963)}]{rabin1963realtime}
Michael~O. Rabin. 1963.
\newblock \href {https://doi.org/10.1007/BF02759719} {Real time computation}.
\newblock \emph{Israel Journal of Mathematics}, 1(4):203--211.

\bibitem[{Raghu et~al.(2017)Raghu, Poole, Kleinberg, Ganguli, and Sohl-Dickstein}]{pmlr-v70-raghu17a}
Maithra Raghu, Ben Poole, Jon Kleinberg, Surya Ganguli, and Jascha Sohl-Dickstein. 2017.
\newblock \href {https://proceedings.mlr.press/v70/raghu17a.html} {On the expressive power of deep neural networks}.
\newblock In \emph{Proceedings of the 34th International Conference on Machine Learning}, volume~70 of \emph{Proceedings of Machine Learning Research}, pages 2847--2854. PMLR.

\bibitem[{Rosenberg(1967)}]{rosenberg-1967-realtime}
Arnold~L. Rosenberg. 1967.
\newblock \href {https://doi.org/10.1145/321420.321423} {Real-time definable languages}.
\newblock \emph{Journal of the ACM}, 14(4):645–662.

\bibitem[{Schick and Sch{\"u}tze(2021)}]{schick-schutze-2021-exploiting}
Timo Schick and Hinrich Sch{\"u}tze. 2021.
\newblock \href {https://doi.org/10.18653/v1/2021.eacl-main.20} {Exploiting cloze-questions for few-shot text classification and natural language inference}.
\newblock In \emph{Proceedings of the 16th Conference of the European Chapter of the Association for Computational Linguistics: Main Volume}, pages 255--269, Online. Association for Computational Linguistics.

\bibitem[{Siegelmann and Sontag(1992)}]{Siegelmann1992OnTC}
Hava~T. Siegelmann and Eduardo~D. Sontag. 1992.
\newblock \href {https://doi.org/10.1145/130385.130432} {On the computational power of neural nets}.
\newblock In \emph{Proceedings of the Fifth Annual Workshop on Computational Learning Theory}, COLT '92, page 440–449, New York, NY, USA. Association for Computing Machinery.

\bibitem[{Sun et~al.(2023)Sun, Li, Li, Wu, Guo, Zhang, and Wang}]{sun-etal-2023-text}
Xiaofei Sun, Xiaoya Li, Jiwei Li, Fei Wu, Shangwei Guo, Tianwei Zhang, and Guoyin Wang. 2023.
\newblock \href {https://doi.org/10.18653/v1/2023.findings-emnlp.603} {Text classification via large language models}.
\newblock In \emph{Findings of the Association for Computational Linguistics: EMNLP 2023}, pages 8990--9005, Singapore. Association for Computational Linguistics.

\bibitem[{Svete et~al.(2024)Svete, Chan, and Cotterell}]{svete2024theoretical}
Anej Svete, Robin Shing~Moon Chan, and Ryan Cotterell. 2024.
\newblock \href {http://arxiv.org/abs/2402.15814} {A theoretical result on the inductive bias of {RNN} language models}.
\newblock \emph{arXiv preprint arXiv:2402.15814}.

\bibitem[{Svete and Cotterell(2023)}]{svete2023recurrent}
Anej Svete and Ryan Cotterell. 2023.
\newblock \href {https://doi.org/10.18653/v1/2023.emnlp-main.502} {Recurrent neural language models as probabilistic finite-state automata}.
\newblock In \emph{Proceedings of the 2023 Conference on Empirical Methods in Natural Language Processing}, pages 8069--8086, Singapore. Association for Computational Linguistics.

\bibitem[{Weiss et~al.(2018)Weiss, Goldberg, and Yahav}]{weiss-etal-2018-practical}
Gail Weiss, Yoav Goldberg, and Eran Yahav. 2018.
\newblock \href {https://doi.org/10.18653/v1/P18-2117} {On the practical computational power of finite precision {RNN}s for language recognition}.
\newblock In \emph{Proceedings of the 56th Annual Meeting of the Association for Computational Linguistics (Volume 2: Short Papers)}, pages 740--745, Melbourne, Australia. Association for Computational Linguistics.

\bibitem[{Yang et~al.(2018)Yang, Dai, Salakhutdinov, and Cohen}]{yang2018breaking}
Zhilin Yang, Zihang Dai, Ruslan Salakhutdinov, and William~W. Cohen. 2018.
\newblock \href {https://openreview.net/forum?id=HkwZSG-CZ} {Breaking the softmax bottleneck: A high-rank {RNN} language model}.
\newblock In \emph{International Conference on Learning Representations}.

\bibitem[{Yarotsky(2017)}]{YAROTSKY2017103}
Dmitry Yarotsky. 2017.
\newblock \href {https://doi.org/https://doi.org/10.1016/j.neunet.2017.07.002} {Error bounds for approximations with deep relu networks}.
\newblock \emph{Neural Networks}, 94:103--114.

\end{thebibliography}

\onecolumn
\appendix

\section{Computation of String Probabilities Under \texorpdfstring{\pfsaAcr{}s}{PFSAs}}\label{app:pfsa}

In this section, we describe how a general \pfsaAcr{} computes string probabilities using standard linear algebra notation.
This will allow for a concise connection to the Elman recurrence (cf. \cref{eq:elman-update-rule}) later.
Let $\wfsa = \wfsatuple$ be a \pfsaAcr{} and let $\str \in \kleene{\alphabet}$ be a string.
Let $\set{\mT^{\left(\sym\right)} \mid \sym \in \alphabet}$ be the set of symbol-specific transition matrices of $\wfsa$, i.e., the matrices with entries
\begin{equation}\label{eq:pfsa-trans-mtx}
\mT^{\left(\sym\right)}_{\left(\stateq, \stateq^\prime\right)} \defeq \begin{cases}
        \weightv & \ifcond \edge{\stateq}{\sym}{\weightv}{\stateq^\prime} \in \trans \\
        0        & \otherwisecondition
    \end{cases}.
\end{equation} 
That is, $\mT^{\left(\sym\right)}_{\left(\stateq, \stateq^\prime\right)}$ denotes the probability of transitioning from state $\stateq$ to state $\stateq^\prime$ with the symbol $\sym$. 
Then, the probability of a string $\str = \sym_1 \ldots \sym_\strlen \in \kleene{\alphabet}$ under $\wfsa$ is given by:
\begin{equation} \label{eq:pfsa-strprob}
    \pLMA\left(\str\right) = \overset{\rightarrow}{\initf}^\top \mT^{\left(\sym_1\right)} \mT^{\left(\sym_2\right)} \cdots \mT^{\left(\sym_{\strlen}\right)} \overset{\rightarrow}{\finalf},
\end{equation}
where $\overset{\rightarrow}{\initf} \defeq \left(\initf(\stateq)\right)_{\stateq \in \states}$ and $\overset{\rightarrow}{\finalf} \defeq \left(\finalf(\stateq)\right)_{\stateq \in \states}$ 
are vectors whose elements are the initial and final weights of the states of $\wfsa$, respectively 
\citep{peng-etal-2018-rational}.

\section{Construction of an RNN to Simulate \texorpdfstring{\pfsaAcr{}s}{PFSAs}}\label{app:construction}
\construction*
\begin{proof}
To prove \cref{thm:construction} we must, for any (possibly non-deterministic) \pfsaAcr{} $\wfsa = \wfsatuple$, construct an Elman RNN that keeps track of the state--string probabilities in its hidden states.
At a high level, we want to show that the equality \cref{eq:construction-invariance} holds for all string prefixes $\strlet \in \kleene{\alphabet}$ for all time steps $\tstep\in\N$, that is, it is an invariance that holds for all $\tstep\in\N$.
We provide the full construction of the RNN in multiple steps.
The construction uses a hidden dimension of $\hiddDim = \nstates\nsymbols$.

\paragraph{Initial condition.}
The initial hidden state, $\hiddStateZero = \initstate$ encodes the initial probability distribution over $\states$---the one defined by the initial weighting function $\initf$---but only for positions corresponding to a single arbitrary dummy input symbol $\sym_0\in\alphabet$:%
\footnote{Throughout the text, we assume that all non-specified entries of the parameters are set to $0$.}\looseness=-1
\begin{equation}\label{eq:hidstate0}
    \initstate_{(\stateq, \sym_0)} \defeq \initf(\stateq), \quad\quad\quad \forall \stateq \in \states = \pLMA\left(\stateq, \eps\right).
\end{equation}

\paragraph{Encoding the transition function.}
The Elman recurrence (cf. \cref{eq:elman-update-rule}) will effectively simulate the dynamics of the \pfsaAcr.
Intuitively, the updates to the hidden state will be analogous to parts of the computation \cref{eq:pfsa-strprob}; the individual updates to the hidden state (through the multiplication with the recurrence matrix $\recMtx$) are analogous to the multiplications with the matrices $\mT^{\left(\sym\right)}$.
However, since the RNN cannot \emph{choose} among the possible transition matrices $\set{\mT^{\left(\sym\right)} \mid \sym \in \alphabet}$ when updating the hidden state (unlike the \pfsaAcr{}), the hidden state update rule will perform \emph{all possible} transitions in parallel and then mask out the transitions that are not applicable to the current symbol.

The recurrence matrix $\recMtx \in \R^{\hiddDim \times \hiddDim}$ thus encodes the transition weights.
By having access to the hidden state $\hiddStatet$ through the matrix--vector multiplication, $\recMtx$ performs all possible transitions of the \pfsaAcr, i.e., the transitions for all symbols, in parallel.
That is, for each symbol $\sym\in\alphabet$, it computes a new distribution over states $\pLMA\left(\stateq, \strlet\sym\right)$.

The entries of the recurrence matrix are set as follows, for all $\sym^\prime\in\alphabet$:
\begin{align}
    \recMtx_{(\stateq^\prime, \sym^\prime), (\stateq, \sym)} & \defeq
    \begin{cases}
        {\weightv} & \mid \ifcond \edge{\stateq}{\sym}{\weightv}{\stateq^\prime} \in \trans \\
        0          & \mid \textbf{otherwise }
    \end{cases}.\label{eq:rec-matrix1}
\end{align}
Since $\sym^\prime$ is free, the product $\recMtx \hiddStatetminus$ copies $\statedistributionFun{\stateq}{\strlt}$ into every entry associated with $\stateq$.
Visually, the recurrence matrix $\recMtx$ consists of the transition matrices of each symbol, stacked vertically and copied horizontally, e.g., if $\alphabet=\{\syma, \symb, \cdots, \symz\}$:
\begin{align}
    \recMtx = \begin{bmatrix}
                  \recMtx^{(\syma)} & \recMtx^{(\symb)} & \cdots & \recMtx^{(\symz)}\\
                  \recMtx^{(\syma)} & \recMtx^{(\symb)} & \cdots & \recMtx^{(\symz)}\\
                  \vdots&\vdots&\vdots&\vdots\\
                  \recMtx^{(\syma)} & \recMtx^{(\symb)} & \cdots & \recMtx^{(\symz)}
              \end{bmatrix}\quad  \in \R^{\hiddDim\times\hiddDim},
\end{align}
where each transition matrix $\recMtx^{(\sym)} \defeq (\mT^{\left(\sym\right)})^\top$ has entries
\begin{align}
    \recMtx^{(\sym)}_{\stateq^\prime, \stateq} & \defeq
    \begin{cases}
        {\weightv} & \mid \ifcond \edge{\stateq}{\sym}{\weightv}{\stateq^\prime} \in \trans \\
        0          & \mid \textbf{otherwise }
    \end{cases}.\label{eq:rec-matrix2}
\end{align}
This way, multiplication with the recurrence matrix preserves the invariance in \cref{eq:construction-invariance}:
\begin{subequations}
    \begin{align}
        \left(\recMtx \, \hiddStatet\right)_{(\stateq^\prime, \sym)} &= \sum_{\edge{\stateq}{\sym}{\weightv}{\stateq^\prime} \in \trans}  \weightv \cdot \pLMA\left(\stateq, \strlet\right) \\
        &= \sum_{\stateq\in\states} \pLMA\left(\sym, \stateq^\prime \mid \stateq\right) \cdot \pLMA\left(\stateq, \strlet\right)\\
        &= \sum_{\stateq\in\states} \pLMA\left(\sym, \stateq^\prime \mid \stateq, \strlet \right) \cdot \pLMA\left(\stateq, \strlet\right)\\
        &= \sum_{\stateq\in\states} \pLMA\left(\sym, \stateq^\prime \mid \stateq, \strlet \right) \cdot \pLMA\left(\stateq \mid \strlet\right) \cdot \pLMA\left(\strlet\right)\\
        &= \left(\sum_{\stateq\in\states} \pLMA\left(\sym, \stateq^\prime \mid \stateq, \strlet \right) \cdot \pLMA\left(\stateq \mid \strlet\right) \right) \cdot \pLMA\left(\strlet\right)\\
        &= \pLMA\left(\sym, \stateq^\prime \mid \strlet \right) \cdot \pLMA\left(\strlet\right)\\
        &= \pLMA\left(\sym, \stateq^\prime, \strlet\right) \\
        &= \pLMA\left(\stateq', \strlet \sym\right).
    \end{align}
\end{subequations}
We now have a distribution over states and symbols given the history.
However, at time step $\tstep$ we also observe the actual input symbol, meaning that we want to only retain the distribution over states for the observed symbol.

We use the input matrix $\inMtx$ and the bias vector $\biasVech$ to zero out parts of the hidden state so that it only retains the probabilities of states reachable by reading the current symbol $\symt$ and sets the rest to $0$.
Since the $\ReLU$ activation function zeroes out negative values, it will be sufficient to subtract an appropriate bias term from the entries associated with unobserved symbols.
This can be achieved by defining, for any $\stateq \in \states$ and $\sym \in \alphabet$:
\begin{subequations}
    \begin{align}
        \inMtx_{(\stateq, \sym), (\sym)} & \defeq 1,                              \label{eq:inmtx-sym}                      \\
        \biasVech                                                & \defeq -\one_{\hiddDim}, \label{eq:bias}
    \end{align}
\end{subequations}
where $\one_{\hiddDim}$ is a $\hiddDim$-dimensional vector of ones.
This ensures that the negative bias values are balanced by the product $\inMtx \inEmbedSymt$ exactly for the tuple entries associated with symbol $\symt$:
\begin{subequations}
    \begin{align}
        \hiddState\left(\strlet \sym\right)_{\left(\stateq', \sym'\right)} &= \ReLU\left(\recMtx_{\left(\stateq', \sym'\right)} \hiddState\left(\strlet\right) + \inMtx_{\left(\stateq', \sym'\right)} \onehot{\sym} + \bias_{\left(\stateq', \sym'\right)} \right) \\
        &= \ReLU\left(\pLMA\left(\stateq', \strlet \sym\right) + \ind{\sym' = \sym} - 1\right) \\
        &= \ind{\sym' = \sym}\pLMA\left(\stateq', \strlet \sym\right).
    \end{align}
\end{subequations}
Since probability scores cannot be greater than 1, the values corresponding to symbols that were not observed will be zeroed out by the activation function.

\paragraph{Linearly Bounded Precision.}
Finally, we want to show that the RNN requires linearly bounded precision, i.e., there exist $\strlen_0\in\N$ and $C \in\R_{\geq 0}$ such that for all $\str \in \kleene{\alphabet}$ with $|\str| \geq\strlen_0$, $\precisionFun{\str} \leq  C|\str|$.
In the following, we write the precision required by a rational $x\in\Q$ as 
\begin{equation}
    \precision(x)\defeq\min\limits_{\substack{p,q\in\N,\\ \frac{p}{q}=x}} \lceil\log_2 p\rceil + \lceil\log_2 q\rceil.
\end{equation}
Note that $\precision(xy)\leq\precision(x)+\precision(y)$ for all $x,y\in\Q$.

Two remarks will be important for the proof:
\begin{enumerate}
    \item The number of bits required to represent any dimension of $\inMtx \inEmbedSymt + \biasVech$ is upper bounded by a constant $C^\prime$ independent of $\tstep$, given by
    \begin{equation}
        C^\prime \defeq \max_{\sym\in\alphabet}\max_{\idxd\in[\hiddDim]}\precision([\inMtx\inEmbedSym+\biasVech]_\idxd).
    \end{equation}
    \item The $\ReLU$ either leaves a dimension of the hidden state unchanged or sets it to 0, meaning it never increases the required precision of its argument.
\end{enumerate}

We show linear precision by induction over $|\str|$ for $\str \in \kleene{\alphabet}$ with $|\str| \geq \strlen_0 \defeq 1$, choosing $C$ as
\begin{equation}
        C \defeq \precision(\hiddDim) + \max_{\idxi,\idxj\in[\hiddDim]}\precision(\recMtx_{\idxi\idxj}) + \max_{\idxd}\precision(\initstate_\idxd) + C^\prime.
\end{equation}

\begin{itemize}
    \item \textbf{Base case:} For $\str = \sym$ for some $\sym \in \alphabet$, 
    \begin{subequations}
    \begin{align}
    \precisionFun{\sym}&=\max_{\idxd\in[\hiddDim]}\precision(\hiddState_{1,\idxd}) \\
        &= \max_{\idxd\in[\hiddDim]}\precision(\ReLU(\recMtx\initstate + \inMtx \inEmbeddingFun{\sym} + \biasVech)_\idxd)\\
        &\leq \max_{\idxd\in[\hiddDim]}\precision((\recMtx\initstate)_\idxd) + C^\prime\\
        &\leq \max_{\idxd,\idxi,\idxj,\in[\hiddDim]}\precision(\hiddDim \recMtx_{i,j} \initstate_\idxd) + C^\prime\\
        &\leq \precision(\hiddDim) + \max_{\idxi,\idxj\in[\hiddDim]}\precision(\recMtx_{\idxi\idxj}) + \max_{\idxd\in[\hiddDim]}\precision(\initstate_\idxd) + C^\prime \\
        &= C.
    \end{align}
    \end{subequations}
    \item \textbf{Induction step:} Assume that $\precisionFun{\str} \leq C |\str|$ for all $\str \in \kleene{\alphabet}$ with $|\str| = \strlen$.
    Let $\str \in \kleene{\alphabet}$ with $|\str| = \strlen$ be such a string.
    Then, for $\str' \defeq \str \sym'$ with $\sym' \in \alphabet$ (a string of length $\strlen + 1$), we have
    \begin{subequations}
    \begin{align}
        \precisionFun{\str'} &= \max_{\idxd\in[\hiddDim]}\precision(\hiddState_{\tstep{+}1,\idxd}) \\
        &= \max_{\idxd\in[\hiddDim]}\precision(\ReLU\left(\recMtx \vht + \inMtx \inEmbeddingFun{\sym'} + \biasVech \right)_\idxd)\\
        &\leq \max_{\idxd\in[\hiddDim]}\precision((\recMtx\vht)_\idxd) + C^\prime\\
        &\leq \precision(\hiddDim) + \max_{\idxi,\idxj\in[\hiddDim]}\precision(\recMtx_{\idxi\idxj}) + \underbrace{\max_{\idxd\in[\hiddDim]}\precision(\hiddStatetminus_\idxd)}_{=\precisionFun{\str}} + C^\prime \\
        &\leq C + C|\str|\\
        &=C(\str + 1) \\
        &=C\str'.
    \end{align}
    \end{subequations}
    Therefore, any $\ReLU$-activated Elman RNN over the rationals can be represented with linear precision. 
    Note that for $\tstep=0$, $\hiddStatetzero = \initstate$, which requires constant precision depending only on the choice of $\initstate$ which is included in $C$, so above we actually showed a stronger bound of $\precisionFun{\tstep}\leq C(\tstep+1)$ that applies to all time steps.
    Observe also that we use the fact that each computational step in our model corresponds to reading a symbol in $\alphabet$; this is in contrast to models such as those investigated in \citet{nowak2023representational}, which are allowed to consume the empty string $\eps$.
\end{itemize}

\end{proof}

\section{Proofs of Exact Simulation} \label{sec:exact-cor-proofs}

\sparsemaxExactSimulation*
\begin{proof}
Let $\pLMA\left(\stateq, \str\right)$ be $\automaton$'s state--string distribution (cf. \Cref{eq:next-state}).
By \Cref{thm:construction}, there exists an Elman RNN $\rnn = \elmanrnntuple$ with the hidden states\looseness=-1
\begin{equation}
    \hiddState(\strlt)_{(\stateq, \sym)} = \ind{\sym = \symtminus} \pLMA\left(\stateq, \strlt\right)
\end{equation}
for all $\strlt \in \kleene{\alphabet}$.

We now define a simple non-linear transformation $\mlp$ that transforms $\hiddState(\strlt)$ into values that are later normalized to $\pLMA\left(\sym_\tstep\mid \strlt\right)$ by the sparsemax.
The function $\mlp$ is a composition of two functions: \begin{enumerate*}[label=\textit{(\arabic*)}]
    \item $\norm{\cdot}_1$-normalization of the hidden state and
    \item linear transformation of the normalized hidden state that sums the outgoing transitions from $\stateq \in \states$ that emit $\eossym_\tstep$.
\end{enumerate*}
$\mlp$ will this take the form
\begin{equation}
    \mlp\left(\hiddState\right) \defeq \outMtx \; \frac{\hiddState}{\norm{\hiddState}_1}
\end{equation}
for some $\outMtx \in \R^{\eosnsymbols \times \hiddDim}$ that we define below.

Defining $\hiddState \defeq \hiddStatet = \hiddState\left(\strlt\right)$ and applying $\norm{\cdot}_1$ normalization to $\hiddState$ as $\hiddState' \defeq \frac{\hiddState}{\norm{\hiddState}_1}$ results in
\begin{subequations}
\begin{align}
    \hiddState'_{(\stateq, \sym)} &= \frac{\ind{\sym = \symtminus} \pLMA\left(\stateq, \strlt\right)}{\sum_{\idxd^\prime}\abs{\hiddState_{\idxd^\prime}}}\\
    &= \frac{\ind{\sym = \symtminus} \pLMA\left(\stateq, \strlt\right)}{\sum_{\stateq\in\states, \sym' \in \alphabet}\abs{\hiddState_{(\stateq^\prime, \sym')}}}\\
    &= \frac{\ind{\sym = \symtminus} \pLMA\left(\stateq, \strlt\right)}{\sum_{\stateq^\prime\in\states, \sym' \in \alphabet} \ind{\sym' = \symtminus} \pLMA\left(\stateq', \strlt\right)}\\
    &= \frac{\ind{\sym = \symtminus} \pLMA\left(\stateq, \strlt\right)}{\sum_{\stateq^\prime\in\states} \pLMA\left(\stateq', \strlt\right)}\\
    &= \ind{\sym = \symtminus} \statedistributionFun{\stateq}{\strlt},
\end{align}
\end{subequations}
i.e., the $\hiddState'$ holds a distribution over states given all the input symbols up to time step $\tstep - 1$. We define $\outMtx$ for $\stateq\in\states, \sym\in\alphabet, \eossym\in\eosalphabet$ as
\begin{align}\label{eq:output-matrix}
    \outMtx_{\eossym, (\stateq, \sym)} \defeq \begin{cases}
        \sum_{\edge{\stateq}{\sym}{\weightv}{\stateq'} \in\trans} \weightv & \ifcond \eossym\in\alphabet \\
        \finalfFun{\stateq}  & \ifcond \eossym = \eos 
    \end{cases} = 
    \begin{cases}
        \pLMA\left(\sym\mid \stateq\right) & \ifcond \eossym\in\alphabet \\
        \pLMA\left(\eos\mid \stateq\right)  & \ifcond \eossym = \eos 
    \end{cases}.
\end{align}
Computing the matrix--vector product $\outMtx \, \hiddState'\left(\strlt\right)$ reveals that $\left(\outMtx \, \hiddState'\left(\strlt\right)\right)_{\eossym} = \pLMA\left(\eossym\mid\strlt\right)$:
\begin{subequations}
    \begin{align}
        \left(\outMtx \, \hiddState'\left(\strlt\right)\right)_{\eossym} 
        &= \sum_{\idxd = 1}^{\hiddDim} \outMtx_{\eossym, \idxd} \hiddState'\left(\strlt\right)_{\idxd} \\
        &= \sum_{\stateq \in \states, \sym \in \alphabet} \outMtx_{\eossym, \left(\stateq, \sym\right)} \hiddState'\left(\strlt\right)_{\left(\stateq, \sym\right)} \\
        &= \sum_{\stateq \in \states, \sym \in \alphabet} \pLMA\left(\eossym\mid \stateq\right) \ind{\sym = \symtminus} \statedistributionFun{\stateq}{\strlt} \\
        &= \sum_{\stateq \in \states} \pLMA\left(\eossym\mid \stateq\right) \statedistributionFun{\stateq}{\strlt} \\
        &= \sum_{\stateq \in \states} \pLMA\left(\eossym\mid \stateq, \strlt\right) \statedistributionFun{\stateq}{\strlt} \\
        &= \pLMA\left(\eossym\mid\strlt\right).
    \end{align}
\end{subequations}
The identity of the sparsemax on $\SimplexEosalphabetminus$ then gives us
\begin{subequations}
    \begin{align}
        \pLMR(\eossym \mid \strlet) &\defeq \sparsemaxfunc{\outMtx \, \hiddState(\strlet)}{\eossym} \\
        &= \pLMA(\eossym \mid \strlet)
    \end{align}
\end{subequations}
for all $\strlt \in \kleene{\alphabet}, \eossym \in \eosalphabet$.
This implies by \cref{eq:lnlm} that $\pLMR$ and $\pLMA$ are weakly equivalent.
\end{proof}

\aernnExactSimulation*
\begin{proof}
Like in the proof of \cref{thm:sparsemax}, we construct the function $\mlp$ that transforms the hidden state $\hiddStatetminus$ into $\eosnsymbols$ values that are later normalized into $\pLMA\left(\symt\mid\hiddStatet\right)$, this time with the softmax.
Concretely, we want to compute the autoregressive next-symbol probabilities $\pLMA\left(\symt\mid \strlt\right)$.
Assuming that all values are positive (we relax this later), we can write:
\begin{subequations}
\begin{align}
    \pLMA\left(\eossym_\tstep\mid \strlt\right) 
    &= \frac{\pLMA\left(\strlt \eossym_\tstep\right)}{\pLMA\left(\strlt\right)} \\
    &= \frac{\sum_{\stateq \in \states} \pLMA\left(\stateq, \strlt \eossym_\tstep\right)}{\sum_{\stateq \in \states} \pLMA\left(\stateq, \strlt\right)} \\
    &= \frac{\sum_{\stateq \in \states} \pLMA\left(\stateq, \strlt \right) \pLMA\left(\eossym_\tstep \mid \stateq, \strlt\right)}{\sum_{\stateq \in \states} \pLMA\left(\stateq, \strlt\right)} \\
    &= \frac{\sum_{\stateq \in \states} \pLMA\left(\stateq, \strlt \right) \pLMA\left(\eossym_\tstep \mid \stateq\right)}{\sum_{\stateq \in \states} \pLMA\left(\stateq, \strlt\right)} \\
    &= \exp\left(\log\left(\sum_{\stateq \in \states} \pLMA\left(\stateq, \strlt \right) \pLMA\left(\eossym_\tstep \mid \stateq\right)\right) - \log\left(\sum_{\stateq \in \states} \pLMA\left(\stateq, \strlt\right)\right)\right). \label{eq:softmax-last}
\end{align}
\end{subequations}
Intuitively, $\pLMA\left(\symt\mid \strlt\right)$ can thus be computed by normalizing the values $\log\left(\sum_{\stateq \in \states} \pLMA\left(\stateq, \strlt \right) \pLMA\left(\eossym_\tstep \mid \stateq\right)\right) - \log\left(\sum_{\stateq \in \states} \pLMA\left(\stateq, \strlt\right)\right)$ for $\eossym_\tstep \in \eosalphabet$ with the softmax.
Defining the vector $\vv_\eossym \defeq \log\left(\sum_{\stateq \in \states} \pLMA\left(\stateq, \strlt \right) \pLMA\left(\eossym_\tstep \mid \stateq\right)\right) - \log\left(\sum_{\stateq \in \states} \pLMA\left(\stateq, \strlt\right)\right)$ for $\eossym_\tstep \in \eosalphabet$, we therefore have\looseness=-1
\begin{equation}
    \pLMA\left(\symt\mid \strlt\right) = \softmaxfunc{\vv}{\eossym}.
\end{equation}
However, due to the invariance of the softmax to the addition of vectors of the form $c \cdot \one$, for any $c \in \R$, we also have that 
\begin{equation}
    \pLMA\left(\symt\mid \strlt\right) = \softmaxfunc{\vv + \log\left(\sum_{\stateq \in \states} \pLMA\left(\stateq, \strlt\right)\right) \cdot \one}{\eossym}.
\end{equation}
Entries of $\vv' \defeq \vv + \log\left(\sum_{\stateq \in \states} \pLMA\left(\stateq, \strlt\right)\right) \cdot \one$ equal
\begin{equation}
    \vv'_\eossym \defeq \vv_\eossym + \log\left(\sum_{\stateq \in \states} \pLMA\left(\stateq, \strlt\right)\right) = \log\left(\sum_{\stateq \in \states} \pLMA\left(\stateq, \strlt \right) \pLMA\left(\eossym_\tstep \mid \stateq\right)\right).
\end{equation}
The goal, then, is to compute the values $\sum_{\stateq \in \states} \pLMA\left(\stateq, \strlt \right) \pLMA\left(\eossym_\tstep \mid \stateq\right)$, logarithmically transform them, and then normalize them with the softmax function.
To achieve this, we take the same output matrix $\outMtx \in \R^{\eosnsymbols \times \hiddDim}$ as in the proof of \cref{thm:sparsemax} (cf. \cref{eq:output-matrix}).
The matrix--vector product $\outMtx \hiddState$ contains the values 
\begin{subequations}
\begin{align}
    \left(\outMtx \hiddState\right)_\sym  
        &= \sum_{\idxd = 1}^{\hiddDim} \outMtx'_{\eossym, \idxd} \hiddState\left(\strlt\right)_{\idxd} \\
        &= \sum_{\stateq \in \states, \sym \in \alphabet} \outMtx'_{\eossym, \left(\stateq, \sym\right)} \hiddState\left(\strlt\right)_{\left(\stateq, \sym\right)} \\
        &= \sum_{\stateq \in \states, \sym \in \alphabet} \pLMA\left(\eossym\mid \stateq\right) \ind{\sym = \symtminus} \pLM\left(\stateq, \strlt\right) \\
        &= \sum_{\stateq \in \states} \pLMA\left(\eossym\mid \stateq\right) \pLM\left(\stateq, \strlt\right).
\end{align}
\end{subequations}
These are the values we are interested in (from \cref{eq:softmax-last}).

To allow for zero-valued arguments to the $\log$, we now define the non-linear transformation $\overline{\log}$ as
\begin{equation} \label{eq:aernn-logprobs}
    \overline{\log}\left(\vx\right)_d \defeq \begin{cases}
        \log\left(\vx \right)_d  & \ifcond \vx_d  > 0\\
        -\infty & \otherwisecondition
    \end{cases}.
\end{equation}
Note that the use of $-\infty$ here is what necessitates us to require the extended reals, in contrast to in \Cref{thm:sparsemax}.  
This motivates the definition of $\mlp\colon \R^\hiddState \to \Rex^\eosnsymbols$ as
\begin{equation} \label{eq:log-exact-mlp}
    \mlp\left(\hiddState\right) \defeq \overline{\log} \left(\outMtx' \hiddState\right).
\end{equation}
The application of the softmax to $\mlp\left(\hiddState\right)$ results in 
\begin{subequations}
    \begin{align}
        \pLMR\left(\eossym \mid \strlt\right) 
        &= \softmaxfunc{\mlp\left(\hiddState\right)}{\eossym} \\
        &= \frac{\exp\left(\overline{\log}\left(\pLMA\left(\strlt\eossym\right)\right)\right)}{\sum_{\eossym' \in \eosalphabet} \exp\left(\overline{\log}\left(\pLMA\left(\strlt\eossym'\right)\right)\right)} \\
        &= \frac{\pLMA\left(\strlt\eossym\right)}{\sum_{\eossym' \in \eosalphabet} \pLMA\left(\strlt\eossym'\right)} \\
        &= \pLMA\left(\eossym \mid \strlt\right),
    \end{align}
\end{subequations}
implying that $\pLMA$ and $\pLMR$ are weakly equivalent, as desired.
\end{proof}

\section{Proof of Approximate Simulation by Elman LMs with Softmax over the Real Numbers}\label{app:aernn-approximation-proof}

Recall that we assume, without loss of generality, that for each pair of states $\stateq,\stateq^\prime$ and each symbol $\sym\in\alphabet$ there exists a transition $\edge{\stateq}{\sym}{w}{\stateq^\prime}$, albeit potentially with zero weight, $w=0$.
For any \pfsaAcr{}, we define the following corresponding fully connected\footnote{Fully connected here means the \pfsaAcr{} has nonzero transition probability for all symbols between all pairs of strings, and nonzero final weights for all states.} automaton:
\begin{definition}
    Given a \pfsaAcr{} $\automaton=\wfsatuple$ and $\delta\in \R$, 
    $\automaton$'s \defn{\deltapfsaAcr} $\automatondelta=\left(\alphabet, \states, \trans_\delta, \initf, \finalf_\delta\right)$ is a fully connected \pfsaAcr{} derived from $\automaton$ by 
    \begin{enumerate}[label=(\arabic*.)]
        \item adding $\delta$ to all transition weights and final weights and
        \item locally re-normalizing the outgoing transition weights and the final weight of each state such that they sum to $1$.
    \end{enumerate}
\end{definition}

See \cref{alg:deltapfsa} for the pseudocode of transforming a \pfsaAcr{} to a \deltapfsaAcr.
\begin{algorithm}[h]
\begin{algorithmic}[1]
\Func{\deltapfsaAcr($\automaton=\wfsatuple$, $\delta$)}
\State $\trans_\delta\leftarrow\{\}$
\State $\finalf_\delta(\stateq)\leftarrow 0\quad \forall\stateq\in\states$
\For{$\edge{\stateq}{\sym}{w}{\stateq^\prime}\in\automaton$}
    \State $w_\delta \leftarrow \frac{w+\delta}{1+(\nsymbols\nstates+1)\delta}$\label{todeltapfsa:newweight}
    \State Add transition $\edge{\stateq}{\sym}{w_\delta}{\stateq^\prime}$ to $\trans_\delta$.\label{todeltapfsa:deltatrans}
\EndFor
\For{$\stateq\in\states$}
\State $\finalf_\delta(\stateq)\leftarrow\frac{\finalfFun{\stateq}+\delta}{1+(\nsymbols\nstates+1)\delta}$\label{todeltapfsa:deltarho}
\EndFor
\State $\automatondelta\leftarrow\left( \alphabet, \states, \trans_\delta, \initf, \finalf_\delta \right)$
\State \Return $\automatondelta$
\EndFunc
\end{algorithmic}
\caption{Conversion of a \pfsaAcr{} to a \deltapfsaAcr}
\label{alg:deltapfsa}
\end{algorithm}

\begin{definition}
Given two LMs $\pLM$ and $\qLM$ over $\kleene{\alphabet}$ and a subset $S\subset\kleene{\alphabet}$, the \defn{restricted \tvdacr} on $S$ is defined as\looseness=-1
\begin{equation}
    \rtvd{S}{\pLM}{\qLM}\defeq\frac{1}{2}\sum_{\str\in S}|\pLM(\str) - \qLM(\str)|.
\end{equation}
\end{definition}

\begin{lemma}\label{lem:continuity}
    Let $\automaton=\wfsatuple$ be a \pfsaAcr{} inducing the LM $\pLMA$ and $\automatondelta$ $\automaton$'s \deltapfsaAcr inducing the LM $\pLMdelta$ for some $\delta \in \R$.\footnote{In this paper, we are only interested in small $\delta > 0$. 
    However, the weighted automaton $\automatondelta$ is well defined for any $\delta \in \R$, albeit with negative weights in some cases. 
    In the case of negative weights, $\automatondelta$ is no longer a \emph{probabilistic} FSA.
    }
    For any finite subset of strings $S\subset\kleene{\alphabet}$, we have that the restricted total variation distance $\rtvd{S}{\pLMA}{\pLMdelta}$ is continuous at $\delta = 0$.
\end{lemma}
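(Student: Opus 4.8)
The plan is to reduce the statement to the continuity, at $\delta = 0$, of each individual string probability $\delta \mapsto \pLMdelta(\str)$, and then to read that continuity off the closed-form matrix expression for string probabilities from \cref{app:pfsa}. Since $S$ is finite and $\pLMA(\str)$ does not depend on $\delta$,
\begin{equation}
    \rtvd{S}{\pLMA}{\pLMdelta} = \frac{1}{2}\sum_{\str \in S} |\pLMA(\str) - \pLMdelta(\str)|
\end{equation}
is a finite sum of functions of $\delta$; as $x \mapsto |x|$ is continuous, finite sums of continuous functions are continuous, and translation by the constant $\pLMA(\str)$ preserves continuity, it suffices to show that $\delta \mapsto \pLMdelta(\str)$ is continuous at $\delta = 0$ for every $\str \in \kleene{\alphabet}$.

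To do so, I would write $\str = \sym_1 \cdots \sym_\strlen$ and invoke the matrix formula $\pLMdelta(\str) = \overset{\rightarrow}{\initf}^\top \mT_\delta^{(\sym_1)} \cdots \mT_\delta^{(\sym_\strlen)} \overset{\rightarrow}{\finalf_\delta}$ from \cref{app:pfsa}. By \cref{alg:deltapfsa}, every entry of each $\mT_\delta^{(\sym)}$ and of $\overset{\rightarrow}{\finalf_\delta}$ has the form $\frac{w + \delta}{1 + (\nsymbols\nstates + 1)\delta}$ for the corresponding original weight $w$ (possibly $0$, by the fully connected convention), while $\overset{\rightarrow}{\initf}$ is constant in $\delta$. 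The denominator $1 + (\nsymbols\nstates + 1)\delta$ equals $1$ at $\delta = 0$, so each such entry is a rational function of $\delta$ that is continuous on a neighbourhood of $0$ and equals $w$ at $\delta = 0$; hence $\mT_0^{(\sym)} = \mT^{(\sym)}$ and $\overset{\rightarrow}{\finalf_0} = \overset{\rightarrow}{\finalf}$, so that $\pLM_{\automaton_0}(\str) = \pLMA(\str)$. Being a fixed finite product of matrices and vectors whose entries are continuous in $\delta$, the map $\delta \mapsto \pLMdelta(\str)$ is therefore continuous at $\delta = 0$, which combined with the reduction above proves the lemma (and incidentally shows $\rtvd{S}{\pLMA}{\pLMdelta} \to 0$ as $\delta \to 0$).

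There is no real obstacle here: morally, string probabilities are multilinear polynomials in the automaton's weights, which in turn are rational functions of $\delta$ with denominator nonvanishing at $\delta = 0$. The only point needing a word of care is that for $\delta < 0$ the automaton $\automatondelta$ may acquire negative weights and hence fail to be a genuine \pfsaAcr{}; but, as noted in the excerpt, it is still a well-defined weighted automaton, so the displayed matrix formula still assigns $\pLMdelta(\str)$ a real value, which is all the continuity argument uses. (If one prefers to stay within \pfsaAcr{}s, one can instead work on a one-sided neighbourhood $\delta \in [0, \delta_0)$, which is harmless since continuity at $0$ only concerns arbitrarily small $\delta$.)
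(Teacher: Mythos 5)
Your proof is correct and follows essentially the same route as the paper's: both arguments observe that each perturbed weight is a rational function of $\delta$ with denominator $1 + (\nsymbols\nstates+1)\delta$ nonvanishing near $0$ and equal to the original weight at $\delta = 0$, so that string probabilities over the finite set $S$ are continuous at $\delta = 0$. The only cosmetic difference is that you phrase the stringsum via the matrix formula of \cref{app:pfsa} and reduce explicitly to per-string continuity, whereas the paper expands $\pLMdelta(\str)$ as a sum over paths of products of perturbed weights and passes the limit $\delta \to 0$ inside directly.
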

\begin{proof}
Let $\paths(\str)$ be the set of all paths in $\automaton$ that accept $\str$.
Because $\automaton$ and $\automatondelta$ have the same topology, we will also use $\paths(\str)$ for the set of all strings in $\automatondelta$ that accept $\str$.
Note that $\prevq(\apath)$ is a finite set since both $\automaton$ and $\automatondelta$ is a real-time \pfsaAcr{}.
Now, to show that $\tvd{\pLMA}{\pLMdelta}$ is continuous at $\delta=0$, we need to show $\lim_{\delta \rightarrow 0} \rtvd{S}{\pLMA}{\pLMdelta}$ exists.
This follows directly from the following manipulations:
\begin{subequations}
\begin{align}
\lim_{\delta \rightarrow 0} \rtvd{S}{\pLMA}{\pLMdelta}&= \lim_{\delta \rightarrow 0}  \frac{1}{2}\sum_{\str\in S}|\pLMA(\str) - \pLMdelta(\str)|\\
    &= \lim_{\delta \rightarrow 0} \frac{1}{2}\sum_{\str\in S}\left|\sum_{\apath\in\paths(\str)}\prod_{\idx = 0}^{|\str|+1} w_n-\sum_{\apath\in\paths(\str)}\prod_{\idx = 0}^{|\str|+1} w^{\delta}_n\right| \\
    &= \lim_{\delta \rightarrow 0} \frac{1}{2}\sum_{\str\in S}\left|\sum_{\apath\in\paths(\str)}\prod_{\idx = 0}^{|\str|+1} w_n-\sum_{\apath\in\paths(\str)}\prod_{\idx = 0}^{|\str|+1} \frac{w_n + \delta}{1 + \delta(|\alphabet||\states| + 1)}\right| \label{eq:limit-outside} \\
    &=  \frac{1}{2}\sum_{\str\in S}\left|\sum_{\apath\in\paths(\str)}\prod_{\idx = 0}^{|\str|+1} w_n-\sum_{\apath\in\paths(\str)}\prod_{\idx = 0}^{|\str|+1} \frac{w_n + \lim_{\delta \rightarrow 0} \delta}{1 + \lim_{\delta \rightarrow 0} \delta(|\alphabet||\states| + 1)}\right| \label{eq:limit-inside} \\
    &=  \frac{1}{2}\sum_{\str\in S}\left|\sum_{\apath\in\paths(\str)}\prod_{\idx = 0}^{|\str|+1} w_n-\sum_{\apath\in\paths(\str)}\prod_{\idx = 0}^{|\str|+1} \frac{w_n + 0}{1 + 0}\right| \\
    &=  \frac{1}{2}\sum_{\str\in S}\left|\sum_{\apath\in\paths(\str)}\prod_{\idx = 0}^{|\str|+1} w_n-\sum_{\apath\in\paths(\str)}\prod_{\idx = 0}^{|\str|+1} w_n\right|  \\
    &= 0,
\end{align}
\end{subequations}
where the step from \Cref{eq:limit-inside} to \Cref{eq:limit-outside} follows from the basic limit laws (continuous functions preserve limits), as desired.
\end{proof}

\begin{lemma}\label{lem:lmpartition}
    Let $\automaton$ be a trim \pfsaAcr{} inducing the LM $\pLMA$.
    For any $\epsilon>0$, we can partition the set of all strings $\kleene{\alphabet}$ into a finite set $S$ and its complement $\kleene{\alphabet}\setminus S$ such that $\pLMA\left(\kleene{\alphabet}\setminus S\right) = \sum\limits_{\str\in\kleene{\alphabet}\setminus S}\pLMA(\str)<\epsilon$.
\end{lemma}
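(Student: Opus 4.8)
The plan is to exploit that $\pLMA$, being induced by a \emph{trim} \pfsaAcr{}, is a \emph{tight} language model, so that $\sum_{\str\in\kleene{\alphabet}}\pLMA(\str)=1$; a convergent series of nonnegative terms then has arbitrarily small tails, and stratifying $\kleene{\alphabet}$ by string length turns such a tail into the desired finite/cofinite partition.

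First I would verify that a trim \pfsaAcr{} induces a tight LM. Co-accessibility means that from every state $\stateq$ there is a nonzero-weight path to a state with positive final weight, and we may take this path simple, hence of length at most $\nstates-1$; since removing cycles only deletes edges, it still has all-nonzero weights. Thus from any state the probability of halting within $\nstates$ steps is at least $p\defeq w_{\min}^{\,\nstates-1}\,\finalf_{\min}>0$, where $w_{\min}$ and $\finalf_{\min}$ denote the smallest positive transition and final weights of $\automaton$. Iterating over blocks of $\nstates$ steps, the probability that the automaton has not halted after $k\nstates$ steps is at most $(1-p)^k\to 0$, so the probability of an infinite run is $0$ and $\sum_{\str\in\kleene{\alphabet}}\pLMA(\str)=1$. (If one prefers to avoid this estimate, it already suffices that $\sum_{\str}\pLMA(\str)\le 1$ for \emph{every} \pfsaAcr{}, which makes the series convergent; the rest of the argument then goes through verbatim with $\le 1$ in place of $=1$.)

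Second, I would take $S\defeq\{\str\in\kleene{\alphabet}:|\str|\le n_0\}$ for a suitable $n_0\in\N$. Because $\alphabet$ is finite, $S$ is finite, of size $\sum_{i=0}^{n_0}\nsymbols^{\,i}$. The partial sums $\sum_{|\str|\le n}\pLMA(\str)$ are nondecreasing in $n$ and, by tightness, converge to $1$, so the tail $\pLMA(\kleene{\alphabet}\setminus S)=1-\sum_{|\str|\le n_0}\pLMA(\str)$ tends to $0$ as $n_0\to\infty$ (in fact, by the estimate above, it decays geometrically in $n_0$, roughly like $(1-p)^{n_0/\nstates}$, since a string of length at least $n_0$ corresponds to a run that has not halted after about $n_0$ steps). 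Choosing $n_0$ large enough that $\pLMA(\kleene{\alphabet}\setminus S)<\epsilon$ finishes the proof.

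All the ingredients are elementary, so there is no genuine obstacle; the only point that needs attention is the justification that the total string mass equals $1$, i.e., tightness of $\pLMA$, which is exactly where the trimness hypothesis is used — and, as noted, even this can be sidestepped since the total string mass of any \pfsaAcr{}-induced LM is at most $1$.
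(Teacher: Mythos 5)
Your proof is correct, and its overall skeleton matches the paper's: establish that the total string mass converges (indeed equals $1$), then stratify $\kleene{\alphabet}$ by length and take $S$ to be all strings of length at most some $n_0$, using the vanishing tail of the convergent series. The difference lies in how the tightness ingredient is obtained. The paper simply cites \citet[Thm.\ 5.3]{du-etal-2023-measure} (trimness implies every accessible state is co-accessible, hence $\pLMA$ is tight), whereas you prove tightness from scratch: co-accessibility gives, from every state, a simple nonzero-weight path of length at most $\nstates-1$ to a final state, hence a halting probability of at least $p = w_{\min}^{\nstates-1}\finalf_{\min} > 0$ within $\nstates$ steps, and the Markov property yields geometric decay $(1-p)^k$ of the non-halting probability. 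This self-contained argument is sound (it is essentially the proof of the cited result) and additionally gives a quantitative, geometrically decaying bound on the tail $\pLMA(\kleene{\alphabet}\setminus S)$, which the paper's soft convergence argument does not. Your further remark is also correct and a genuine (mild) strengthening: since any \pfsaAcr{}-induced LM has total mass at most $1$, the series converges and the lemma as \emph{stated} holds without trimness at all. One caveat worth keeping in mind: the lemma is invoked in \cref{thm:almost-there} to produce a set $S$ with \emph{both} $\pLMA(S) > 1-\tfrac{\epsilon}{2}$ and $\pLMA(\kleene{\alphabet}\setminus S) < \tfrac{\epsilon}{2}$, and the first of these does require tightness, so the trimness hypothesis is not dispensable in the way the lemma is actually used downstream.
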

\begin{proof}
By \citet[Thm 5.3]{du-etal-2023-measure}, a \pfsaAcr{} induces
a tight language model if and only if every accessible state is also co-accessible.
Thus, because we assume $\automaton$ is trim, the above-stated condition holds and $\pLMA$ is tight.
Moreover, because $\pLMA$ is tight, we have
\begin{equation}
1 = \sum_{\str \in \kleene{\alphabet}} \pLMA(\str)  = \sum_{n=0}^\infty \sum_{\str \in  \alphabet^{n}} \pLMA(\str)
\end{equation} 
Since the infinite sum converges, we have that for any $\epsilon > 0$, there exists an $M < \infty$ such that
\begin{equation}
\sum_{n=M + 1}^\infty \sum_{\str \in  \alphabet^{n}} \pLMA(\str) < \epsilon
\end{equation}
and thus
\begin{equation}
\left|1 - \sum_{n=0}^M \sum_{\str \in  \alphabet^{n}} \pLMA(\str)\right| < \epsilon.
\end{equation}
Defining $S \defeq \bigcup_{n=0}^M \alphabet^n$ completes the proof since $\sum_{n=0}^M \sum_{\str \in  \alphabet^{n}} \pLMA(\str) = \sum_{\str \in  S} \pLMA(\str) \geq 1-\epsilon$ and $\sum_{\str \in \kleene{\alphabet} \setminus S} \pLMA(\str) < \epsilon$.
\end{proof}

\begin{theorem}\label{thm:almost-there}
    For any \pfsaAcr{} $\automaton$ inducing the LM $\pLMA$ and any $\epsilon>0$, there exists a rational $\eta>0$ such that $\tvd{\pLMA}{\pLMeta}<\epsilon$, where $\pLMeta$ is the LM of $\automatoneta$, the $\eta$-perturbed $\automaton$.
\end{theorem}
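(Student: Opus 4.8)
The plan is to combine the two preceding lemmas: \Cref{lem:lmpartition} lets us ignore all but finitely many strings when measuring total variation distance, while \Cref{lem:continuity} controls the error the perturbation introduces on any fixed finite set of strings. The only extra ingredient needed is that the perturbed automaton $\automatoneta$ actually induces a tight language model for small rational $\eta>0$.

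First I would fix $\epsilon>0$ and apply \Cref{lem:lmpartition} to obtain a finite set $S\subset\kleene{\alphabet}$ with $\pLMA(\kleene{\alphabet}\setminus S)<\epsilon/2$. Next I would observe that for every rational $\eta>0$ the automaton $\automatoneta$ is trim: adding $\eta$ to every transition and final weight and then renormalizing makes all of these weights strictly positive, so every state is reachable from an initial state in one step and every state carries a positive final weight; hence by \citet[Thm.~5.3]{du-etal-2023-measure} the induced $\pLMeta$ is a tight language model, and in particular $\sum_{\str\in\kleene{\alphabet}}\pLMeta(\str)=1$.

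The core estimate splits the total variation distance at $S$:
\begin{equation}
\tvd{\pLMA}{\pLMeta} = \rtvd{S}{\pLMA}{\pLMeta} + \frac{1}{2}\sum_{\str\in\kleene{\alphabet}\setminus S}\bigl|\pLMA(\str)-\pLMeta(\str)\bigr|.
\end{equation}
The tail sum is at most $\tfrac12\pLMA(\kleene{\alphabet}\setminus S)+\tfrac12\pLMeta(\kleene{\alphabet}\setminus S)$, and tightness of $\pLMeta$ lets me rewrite $\pLMeta(\kleene{\alphabet}\setminus S)=1-\pLMeta(S)$, which together with $|\pLMeta(S)-\pLMA(S)|\le 2\,\rtvd{S}{\pLMA}{\pLMeta}$ gives $\pLMeta(\kleene{\alphabet}\setminus S)\le \pLMA(\kleene{\alphabet}\setminus S)+2\,\rtvd{S}{\pLMA}{\pLMeta}$. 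Combining these bounds yields
\begin{equation}
\tvd{\pLMA}{\pLMeta} \le 2\,\rtvd{S}{\pLMA}{\pLMeta} + \pLMA(\kleene{\alphabet}\setminus S).
\end{equation}

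Finally, since $\pLMA(\kleene{\alphabet}\setminus S)<\epsilon/2$ and, by \Cref{lem:continuity} (noting that $\automaton_{0}=\automaton$, so the restricted distance has value $0$ at $\delta=0$), $\rtvd{S}{\pLMA}{\pLMeta}\to 0$ as $\eta\to 0$, I can choose a rational $\eta>0$ small enough that $\rtvd{S}{\pLMA}{\pLMeta}<\epsilon/4$, which makes $\tvd{\pLMA}{\pLMeta}<\epsilon$. The one genuinely delicate point is the control of the \emph{perturbed} model's mass on $\kleene{\alphabet}\setminus S$: \Cref{lem:lmpartition} cannot be applied directly to $\pLMeta$ (the finite set $S$ is tailored to $\automaton$), so the argument must route the tail bound through tightness of $\pLMeta$ combined with its closeness to $\pLMA$ on the finite set $S$.
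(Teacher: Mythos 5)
Your proof is correct and follows essentially the same route as the paper's: split the total variation distance at the finite set $S$ from \Cref{lem:lmpartition}, shrink the restricted distance on $S$ via the continuity of \Cref{lem:continuity} at $\delta=0$, and control the perturbed model's tail mass $\pLMeta(\kleene{\alphabet}\setminus S)$ by combining tightness with closeness to $\pLMA$ on $S$ (the paper packages this last step as \Cref{claim:perturbed-tail}, with slightly different constants). Your explicit argument that $\automatoneta$ is trim and hence $\pLMeta$ is tight is a point the paper leaves implicit, but it does not change the approach.
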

\begin{proof}
Let $\epsilon>0$.
Choose a \emph{finite} set $S \subset \kleene{\alphabet}$
such that $p(S) > 1 - \frac{\epsilon}{2}$ and $p(\kleene{\alphabet} \setminus S) < \frac{\epsilon}{2}$.
By \Cref{lem:lmpartition}, we can always find such a set $S$.
Next, by \Cref{lem:continuity}, $\rtvd{S}{\pLMA}{\pLMdelta} \colon \R \rightarrow \R$ is continuous at $\delta=0$.
Thus, there exists a $\delta'$ such that $|\eta - 0| < \delta'$ 
implies $\rtvd{S}{\pLMA}{\pLMeta} < \frac{\epsilon}{4}$.
Choose a rational $\eta$ positive such that $\eta < \delta'$.
Next, we partition $\tvd{\pLMA}{\pLMeta}$ into three additive terms and bound each term individually
\begin{subequations}
\begin{align}
    \tvd{\pLMA}{\pLMeta} &= \frac{1}{2}\sum_{\str\in \kleene{\alphabet}}|\pLMA(\str) - \pLMeta(\str)| \\
    &= \frac{1}{2}\sum_{\str\in S}|\pLMA(\str) - \pLMeta(\str)|  + \frac{1}{2}\sum_{\str\in \kleene{\alphabet} \setminus S}|\pLMA(\str) - \pLMeta(\str)| \label{subeq:tvdbound10}\\
    &\leq \frac{1}{2}\sum_{\str\in S}|\pLMA(\str) - \pLMeta(\str)|  + \frac{1}{2}\sum_{\str\in \kleene{\alphabet} \setminus S}|\pLMA(\str)| + \frac{1}{2}\sum_{\str\in \kleene{\alphabet} \setminus S}|\pLMeta(\str)| \label{subeq:tvdbound20}\\
    &= \underbrace{\frac{1}{2}\sum_{\str\in S}|\pLMA(\str) - \pLMeta(\str)|}_{< \epsilon/4}  + \frac{1}{2}\underbrace{\sum_{\str\in \kleene{\alphabet} \setminus S}\pLMA(\str)}_{<\epsilon/2} + \frac{1}{2}\underbrace{\sum_{\str\in \kleene{\alphabet} \setminus S}\pLMeta(\str)}_{<\epsilon} \label{subeq:tvdbound30}\\
  &< \frac{1}{4}\epsilon  + \frac{1}{4} \epsilon + \frac{1}{2}\epsilon \label{subeq:tvdbound50}\\ 
  &=  \epsilon,
\end{align}
\end{subequations}
where, in the step from \Cref{subeq:tvdbound30} to \Cref{subeq:tvdbound50}, the first two terms are bounded by the choice of $S$ while the bound on the third term follows from the claim below.
This proves the result.
\end{proof}
\begin{claim}\label{claim:perturbed-tail}
We claim that $\epsilon > \pLMeta(\kleene{\alphabet}\setminus S)$.
\end{claim}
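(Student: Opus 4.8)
The plan is to derive the tail bound on $\pLMeta$ from two ingredients already available: the tightness of the perturbed automaton's language model and the restricted–total-variation bound $\rtvd{S}{\pLMA}{\pLMeta}<\epsilon/4$ that was secured just above in the proof of \cref{thm:almost-there}. First I would observe that $\automatoneta$, being a fully connected \pfsaAcr{} with $\eta>0$, has strictly positive final weight on every state, hence every state is co-accessible; by \citet[Thm 5.3]{du-etal-2023-measure} this means $\pLMeta$ is a \emph{tight} language model. Consequently $\sum_{\str\in\kleene{\alphabet}}\pLMeta(\str)=1$, and since $S$ partitions $\kleene{\alphabet}$, we may write
\begin{equation}
    \pLMeta\left(\kleene{\alphabet}\setminus S\right) = 1 - \pLMeta(S) = 1 - \sum_{\str\in S}\pLMeta(\str).
\end{equation}

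Next I would compare $\pLMeta(S)$ with $\pLMA(S)$ via the triangle inequality applied term-by-term over the finite set $S$:
\begin{equation}
    \left|\pLMA(S) - \pLMeta(S)\right| \;\leq\; \sum_{\str\in S}\left|\pLMA(\str) - \pLMeta(\str)\right| \;=\; 2\,\rtvd{S}{\pLMA}{\pLMeta} \;<\; \frac{\epsilon}{2},
\end{equation}
where the last inequality is exactly the bound $\rtvd{S}{\pLMA}{\pLMeta}<\epsilon/4$ obtained from the choice of $\eta$ (through the continuity of $\delta\mapsto\rtvd{S}{\pLMA}{\pLMdelta}$ at $0$ from \cref{lem:continuity}). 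Combining this with $\pLMA(S)>1-\epsilon/2$ (the defining property of $S$ via \cref{lem:lmpartition}) gives $\pLMeta(S) > \pLMA(S) - \epsilon/2 > 1 - \epsilon$, and therefore $\pLMeta\left(\kleene{\alphabet}\setminus S\right) = 1 - \pLMeta(S) < \epsilon$, which is the claim.

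I expect the only genuinely delicate point to be the first step: justifying that $\pLMeta$ is a bona fide probability distribution summing to $1$ (tightness), since without it $1-\pLMeta(S)$ is not a valid expression for $\pLMeta(\kleene{\alphabet}\setminus S)$. This is handled cleanly by noting that the perturbation makes all final weights positive, so co-accessibility (and hence tightness by \citealp{du-etal-2023-measure}) is automatic for any $\eta>0$; the remainder is just the triangle inequality and bookkeeping of the $\epsilon$-budget, with no nontrivial computation.
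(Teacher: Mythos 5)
Your proof is correct and follows essentially the same route as the paper's: both combine the bound $\rtvd{S}{\pLMA}{\pLMeta}<\epsilon/4$ on the finite set $S$ with $\pLMA(\kleene{\alphabet}\setminus S)<\epsilon/2$ and the normalization of both LMs, differing only in that the paper phrases the final step as a triangle inequality on complement masses while you bound $\pLMeta(S)$ directly and subtract from one. A minor bonus of your write-up is that you justify explicitly the tightness of $\pLMeta$ (via the strictly positive final weights of the perturbed automaton and the co-accessibility criterion), a fact the paper uses only implicitly when it equates $\pLMeta(S)$ with $1-\pLMeta(\kleene{\alphabet}\setminus S)$.
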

\begin{proof}
The proof follows from simple manipulations:
\begin{subequations}
\begin{align}
\frac{\epsilon}{4}>\rtvd{S}{\pLMA}{\pLMeta} &=\frac{1}{2}\sum_{\str\in S}\left|\pLMA(\str) - \pLMeta(\str)\right| \geq \frac{1}{2}\left|\sum_{\str\in S}\left(\pLMA(\str) - \pLMeta(\str)\right)\right| \\
&=\frac{1}{2}\left|\pLMA(S)-\pLMeta(S)\right| =\frac{1}{2}\left|1 - \pLMA(\kleene{\alphabet}\setminus S)-1 - \pLMeta(\kleene{\alphabet}\setminus S)\right|\\ &=\frac{1}{2}\left| \pLMA(\kleene{\alphabet}\setminus S) - \pLMeta(\kleene{\alphabet}\setminus S)\right|.
\end{align}
\end{subequations}
This results in
 $\frac{\epsilon}{2} > \left| \pLMA(\kleene{\alphabet}\setminus S) - \pLMeta(\kleene{\alphabet}\setminus S)\right|$.
Additionally, we have $\frac{\epsilon}{2} > \left| \pLMA(\kleene{\alphabet}\setminus S) \right|$.
Summing these together, we arrive at 
\begin{subequations}
\begin{align}
    \epsilon &> \left| \pLMA(\kleene{\alphabet}\setminus S) - \pLMeta(\kleene{\alphabet}\setminus S)\right| + \left| \pLMA(\kleene{\alphabet}\setminus S) \right| \\
    &\geq \left| \pLMA(\kleene{\alphabet}\setminus S) - \pLMeta(\kleene{\alphabet}\setminus S) + \pLMA(\kleene{\alphabet}\setminus S) \right| = \left| \pLMeta(\kleene{\alphabet}\setminus S)\right|
\end{align}
\end{subequations}
Thus, we arrive at $\epsilon > \pLMeta(\kleene{\alphabet}\setminus S)$.
\end{proof}


Next, we define the notion of Lipschitz continuity.
\begin{definition}
    Let $\sK\subseteq\R^N$ be a subset of $\R^N$.
    A mapping $f\colon \sK\to\R^M$ is Lipschitz (or L-Lipschitz) with respect to a norm $\|\cdot\|_p$ if there exists a constant $L>0$ such that for all $\vx, \vy \in \sK$,
    \begin{equation}
        \|f(\vx)-f(\vy)\|_2\leq L\|\vx-\vy\|_2.
    \end{equation}
\end{definition}

\begin{proposition}[\citet{gao2018properties}]
\label{prop:softmax-lipschitz}
The softmax function $\softmaxshort$ is $\invtemp$-Lipschitz with respect to $\|\cdot\|_2$ i.e., for all $\vx, \vx^\prime \in \R^N$,
\begin{equation}
    \|\softmaxshort(\vx)-\softmaxshort(\vx^\prime)\|_2 \leq \invtemp\|\vx - \vx^\prime\|_2,
\end{equation}
where $\invtemp$ is the inverse temperature parameter as defined in \cref{eq:softmax}.
\end{proposition}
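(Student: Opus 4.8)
The plan is the standard route for Lipschitz bounds of a smooth map: bound the operator norm of the Jacobian of $\softmaxshort$ uniformly over $\R^\setsize$, and then integrate along the segment joining the two points. First I would compute the Jacobian. Writing $\softmaxshort(\vx)_\idxi = \exp(\invtemp\,\evx_\idxi)/\sum_{\idxj}\exp(\invtemp\,\evx_\idxj)$ and differentiating, one gets
\begin{equation}
    \jacobian(\vx) = \invtemp\left(\diag{\softmaxshort(\vx)} - \softmaxshort(\vx)\,\softmaxshort(\vx)^\top\right).
\end{equation}
Set $\mM(\vx)\defeq\diag{\softmaxshort(\vx)}-\softmaxshort(\vx)\,\softmaxshort(\vx)^\top$; this is exactly the covariance matrix of coordinate selection under the categorical distribution $\softmaxshort(\vx)$, hence symmetric and positive semi-definite.

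The key step --- and the only \emph{non-routine} one --- is to show $\norm{\mM(\vx)}_2\le 1$ for every $\vx$. Since $\mM(\vx)\succeq 0$, its spectral norm equals its largest eigenvalue, so it suffices to bound the Rayleigh quotient. For any unit vector $\vv\in\R^\setsize$, writing $\vp\defeq\softmaxshort(\vx)\in\Simplexnminus$,
\begin{equation}
    \vv^\top\mM(\vx)\,\vv = \sum_{\idxi}p_\idxi\evv_\idxi^2 - \Bigl(\sum_{\idxi}p_\idxi\evv_\idxi\Bigr)^2 \le \sum_{\idxi}p_\idxi\evv_\idxi^2 \le \max_\idxi\evv_\idxi^2 \le \norm{\vv}_2^2 = 1,
\end{equation}
where the first inequality drops a non-negative term (a variance never exceeds the corresponding second moment) and the second uses that $\vp$ is a probability vector. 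Hence $\norm{\jacobian(\vx)}_2 = \invtemp\,\norm{\mM(\vx)}_2 \le \invtemp$ for all $\vx\in\R^\setsize$.

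Finally, since $\softmaxshort$ is continuously differentiable on all of $\R^\setsize$, I would parametrize the segment $\vx_t\defeq\vx^\prime + t(\vx-\vx^\prime)$ for $t\in[0,1]$ and write $\softmaxshort(\vx) - \softmaxshort(\vx^\prime) = \int_0^1 \jacobian(\vx_t)\,(\vx-\vx^\prime)\, dt$; taking $\norm{\cdot}_2$, moving it inside the integral, and applying the uniform bound yields
\begin{equation}
    \norm{\softmaxshort(\vx)-\softmaxshort(\vx^\prime)}_2 \le \int_0^1\norm{\jacobian(\vx_t)}_2\,\norm{\vx-\vx^\prime}_2\, dt \le \invtemp\,\norm{\vx-\vx^\prime}_2,
\end{equation}
which is the claim. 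A calculus-free alternative I would also mention: the convex log-sum-exp potential $\vx\mapsto\invtemp^{-1}\log\sum_{\idxi}\exp(\invtemp\,\evx_\idxi)$ is $\invtemp$-smooth (its Hessian is $\invtemp\,\mM(\vx)$) and $\softmaxshort$ is its gradient, so $\invtemp$-Lipschitzness of $\softmaxshort$ is precisely the standard equivalent characterization of that smoothness. The anticipated main obstacle is just the Jacobian-norm bound above; everything else is bookkeeping.
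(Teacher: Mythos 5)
Your proof is correct: the Jacobian computation, the Rayleigh-quotient bound $\norm{\mM(\vx)}_2 \le 1$, and the integration along the segment are all sound (and the bound could even be tightened to $\tfrac{1}{2}$, so the constant $\invtemp$ in the statement is not tight, which is harmless). Note that the paper itself offers no proof of this proposition---it is imported with a citation to \citet{gao2018properties}---and your argument via the Jacobian spectral norm, equivalently the $\invtemp$-smoothness of the log-sum-exp potential, is precisely the standard route taken in that reference, so there is nothing to reconcile.
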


Softmax-normalized Elman LMs (cf. \cref{def:elman-lm}) define the conditional next-symbol probabilities $\pLM\left(\symt\mid\strlt\right)$ by normalizing $\mlp\left(\hiddState\left(\strlt\right)\right)$.
To abstract away the particular hidden states and their transformations, we now introduce the following definition.
\begin{definition} \label{def:f-lm}
    Let $\str \in \kleene{\alphabet}$ be a string of length $T$.
    Let $\mX\in\R^{\hiddDim \times (\strlen{+}1)}$ be a matrix of $\strlen{+}1$ column vectors of size $\hiddDim$:\looseness=-1
    \begin{equation}
        \mX \defeq \begin{bmatrix}
        \vert & \vert &  & \vert \\
        \vx(1)   & \vx(2) & \cdots &  \vx(\strlen{+}1)\\
        \vert & \vert &  & \vert 
        \end{bmatrix}.
    \end{equation}
    Define the function $\fstr\colon \R^{\hiddDim \times (\strlen{+}1)}  \rightarrow [0,1]$ that induces an autoregressive LM $\pLM$:
    \begin{subequations}
    \begin{align}
    \fstr(\mX) &= \pLM(\eos \mid \str) \prod_{\tstep=1}^\strlen \pLM(\eossym_\tstep \mid \strlt) \\
    &= \softmaxshort(\vx(\strlen{+}1))_{\eos} \prod_{\tstep=1}^\strlen \softmaxshort(\vx(\tstep))_{\symt} \\
    &= \prod_{\tstep=1}^{\strlen+1} \softmaxshort(\vx(\tstep))_{\eossym_\tstep} \label{eq:product-of-softmaxes}  \\
    &\defeq \prod_{\tstep=1}^{\strlen+1} \frac{\exp \vx(\tstep)_{\eossym_\tstep}}{\sum_{\symoverline \in \eosalphabet} \exp \vx(\tstep)_{\symoverline}},
    \end{align}
    \end{subequations}
    where $\eossym_{\strlen+1} \defeq \eos$.
\end{definition}

\begin{lemma}\label{lemma:liptschitz}
The function $\fstr$ from \cref{def:f-lm} is Lipschitz continuous with respect to $||\cdot||_{\infty}$ with Lipschitz constant $L_{\str}=\bigO{|\str|}$, which, as the notation suggests, depends on the string $\str$.
\end{lemma}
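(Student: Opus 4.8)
The plan is to exploit the fact that, by \cref{def:f-lm}, $\fstr(\mX)$ is a product of $\strlen+1$ numbers, each of which is a single coordinate of a softmax applied to one column of $\mX$ and hence lies in $[0,1]$. The workhorse will be the elementary telescoping bound: for any $a_1,\dots,a_n$ and $b_1,\dots,b_n$ in $[0,1]$,
\begin{equation}
    \left|\prod_{i=1}^n a_i - \prod_{i=1}^n b_i\right| \le \sum_{i=1}^n |a_i - b_i|,
\end{equation}
which follows by writing $\prod_i a_i - \prod_i b_i = \sum_{k=1}^n \big(\prod_{i<k} a_i\big)(a_k-b_k)\big(\prod_{i>k} b_i\big)$ and bounding each partial product by $1$. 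Applying this with $a_\tstep = \softmaxshort(\vx(\tstep))_{\eossym_\tstep}$ and $b_\tstep = \softmaxshort(\vy(\tstep))_{\eossym_\tstep}$, where $\vx(\tstep)$ and $\vy(\tstep)$ denote the columns of $\mX$ and $\mY$, reduces the lemma to controlling, for each $\tstep$, the change in one softmax coordinate in terms of $\|\mX - \mY\|_\infty$.

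For that per-factor bound I would invoke \cref{prop:softmax-lipschitz}: since $\|\softmaxshort(\vx) - \softmaxshort(\vx')\|_2 \le \invtemp\|\vx - \vx'\|_2$ for all $\vx,\vx'\in\R^{\eosnsymbols}$, extracting a single coordinate (using $|z_j| \le \|z\|_2$) gives
\begin{equation}
    |\softmaxshort(\vx(\tstep))_{\eossym_\tstep} - \softmaxshort(\vy(\tstep))_{\eossym_\tstep}| \le \invtemp\|\vx(\tstep)-\vy(\tstep)\|_2 \le \invtemp\sqrt{\eosnsymbols}\,\|\vx(\tstep)-\vy(\tstep)\|_\infty \le \invtemp\sqrt{\eosnsymbols}\,\|\mX - \mY\|_\infty,
\end{equation}
where $\|\cdot\|_\infty$ on the matrix is the entrywise maximum. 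Combining this with the telescoping bound yields
\begin{equation}
    |\fstr(\mX) - \fstr(\mY)| \le \sum_{\tstep=1}^{\strlen+1}\invtemp\sqrt{\eosnsymbols}\,\|\mX-\mY\|_\infty = (\strlen+1)\,\invtemp\sqrt{\eosnsymbols}\,\|\mX-\mY\|_\infty,
\end{equation}
so we may take $L_\str \defeq (\strlen+1)\invtemp\sqrt{\eosnsymbols}$. Since the inverse temperature $\invtemp$ and the alphabet size $\eosnsymbols$ are fixed parameters of the model and do not depend on $\str$, this gives $L_\str = \bigO{|\str|}$, as claimed.

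There is no deep obstacle here; the only points requiring a little care are the product structure, which the telescoping identity handles, and the norm bookkeeping — the Lipschitz estimate available for the softmax is stated in $\ell_2$, so passing to $\|\cdot\|_\infty$ on columns and then to the entrywise maximum norm on $\mX$ costs the harmless constant factor $\sqrt{\eosnsymbols}$. It is also worth stating explicitly that the product defining $\fstr$ has exactly $\strlen+1$ factors (the $\strlen$ symbol factors plus the $\eos$ factor), which is precisely the source of the linear-in-$|\str|$ growth of the Lipschitz constant; this linear dependence is genuine and cannot be removed, since perturbations of the conditional distributions compound multiplicatively along the string.
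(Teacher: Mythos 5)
Your proof is correct and takes essentially the same route as the paper's: the telescoping bound over the $\strlen+1$ factors is exactly the closed form of the paper's induction, and your per-factor estimate (extracting a coordinate from the $\ell_2$-Lipschitz bound of \cref{prop:softmax-lipschitz}, then passing to $\|\cdot\|_\infty$ by norm equivalence) is the paper's Step 1. If anything, arguing directly on the product of softmax coordinates is slightly cleaner than the paper's induction on the auxiliary product of sup-norms, but the argument and the resulting $L_\str = \bigO{|\str|}$ are the same.
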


\begin{proof}
    The function $\fstr(\mX)$ is a product of softmax functions. 
    Therefore, we divide the proof into two Steps.
    In Step 1, we start by showing that a single softmax $\softmaxshort$ is Lipschitz continuous with respect to $||\cdot||_{\infty}$.
    In Step 2, we show that products of softmax functions, e.g., $\fstr(\mX)$, are Lipschitz continuous by induction.\looseness=-1
    \paragraph{Step 1: Lipschitz continuity of $\softmaxshort$.}
    
\Cref{prop:softmax-lipschitz} states that $\softmaxshort \colon \R^N \rightarrow \R^N$ is Lipschitz with respect to the norm $||\cdot||_2$, with Lipschitz constant $\invtemp$.
This implies the following inequality
\begin{equation}
    |\softmaxshort(\vx)_n - \softmaxshort(\vy)_n| \leq \invtemp ||\vx - \vy||_2 \quad \forall n\in \N.
\end{equation}
We generalize this result to $||\cdot||_{\infty}$ (and, indeed, all norms on $\R^N$) by noting the equivalence of norms on finite-dimensional vector spaces, i.e., there exists a constant $C$ such that
$||\vx||_2 \leq C ||\vx||_{\infty}$ for all $\vx \in \R^N$.
Thus, we get
\begin{equation}\label{eq:pointwise-norm}
  |\softmaxshort(\vx)_n - \softmaxshort(\vy)_n| \leq \invtemp ||\vx - \vy||_2 \leq \underbrace{C \invtemp}_{\defeq L_{\softmaxshort}} ||\vx - \vy||_{\infty}.
\end{equation}
Taking a max over the left-hand side of \Cref{eq:pointwise-norm}, we get
\begin{equation}
    ||\softmaxshort(\vx) - \softmaxshort(\vy)||_{\infty} \leq L_{\softmaxshort}||\vx - \vy||_{\infty}.
\end{equation}
 \paragraph{Step 2: Lipschitz continuity of $\fstr(\mX)$.}
    The function $\fstr(\mX)$, as defined in \Cref{eq:product-of-softmaxes} is a product of $\strlen + 1$ softmax functions.
    By definition, each component of the softmax $\softmaxshort_d$ for $d\in[\hiddDim]$ has a bounded range $[0,1]$.
    We now prove by induction that the product of $T$ softmax functions is Lipschitz continuous.
    Define $f_N\colon\R^{\hiddDim\times\strlen{+}1}\to\R$ as $f_N(\mX)\defeq \prod_{t=1}^{N}\|\softmaxshort(\vx(t))\|_\infty$, where $N\leq \strlen{+}1$ and $\vx(t)$ is the $t^{\text{th}}$ column vector of $\mX$. 
    Clearly, $f_\strlen(\mX)$ upper bounds $\fstr(\mX)$.
    \begin{itemize}
    \item  \textbf{Base case:} For $N=1$, $f_1(\mX)=\|\softmaxshort(\vx(1))\|_\infty$. Therefore, 
    \begin{equation}
        |f_1(\mX) - f_1(\mY)| = \|\softmaxshort(\vx(1)-\vy(1))\|_\infty \leq L_\softmaxshort \|\vx(1)-\vy(1)\|,
    \end{equation} where $L_\softmaxshort$ is the Lipschitz constant as shown in Step 1. 
    Note that $\vx(1)$ and $\vy(1)$ are submatrices of $\mX$ and $\mY$, respectively, so $\|\vx(1)-\vy(1)\|_\infty \leq \|\mX-\mY\|_\infty$. Therefore,
    \begin{equation}
        |f_1(\mX) - f_1(\mY)| \leq L_\softmaxshort \|\mX-\mY\|_\infty.
    \end{equation} 
    \item \textbf{Induction step:}
    Assume that for $N$, $f_N(\mX) = \prod_{\tstep=1}^{N} \|\softmaxshort(\vx(t))\|_\infty$ is Lipschitz with constant $N L_{\softmaxshort}$. 
    We show that $f_N(\mX) = \prod_{t=1}^{N+1} \|\softmaxshort(\vx(\tstep))\|_\infty$ is Lipschitz with constant $(N{+}1) L_{\softmaxshort}$. 
    
    Consider $\mX, \mY \in \R^{\hiddDim\times\strlen{+}1}$. By the definition of $f$, 
    \begin{subequations}
    \begin{align}
    |&f_{N{+}1}(\mX)-f_{N{+}1}(\mY)| \\
    &=|f_N(\mX)\|\softmaxshort(\vx(N{+}1))\|_\infty - f_N(\mY) \|\softmaxshort(\vy(N{+}1))\|_\infty| \\ 
    &=|f_N(\mX)(\|\softmaxshort(\vx(N{+}1))\|_\infty - \|\softmaxshort(\vy(N{+}1))\|_\infty) + (f_N(\mX) - f_N(\mY))\|\softmaxshort(\vx(N{+}1))\|_\infty| \\ 
    &\leq|f_N(\mX)||\|\softmaxshort(\vx(N{+}1))\|_\infty - \|\softmaxshort(\vy(N{+}1))\|_\infty| + |f_N(\mX) - f_N(\mY)|\|\softmaxshort(\vx(N{+}1))\|_\infty\\
    &\leq|\|\softmaxshort(\vx(N{+}1))\|_\infty - \|\softmaxshort(\vy(N{+}1))\|_\infty| + |f_N(\mX) - f_N(\mY)|\label{eq:bound-one}\\
    &\leq|\|\softmaxshort(\vx(N{+}1)) - \softmaxshort(\vy(N{+}1))\|_\infty| + |f_N(\mX) - f_N(\mY)| \label{eq:inverse-triangle}\\
    &\leq L_\softmaxshort\|\vx(N{+}1) - \vy(N{+}1))\|_\infty + |f_N(\mX) - f_N(\mY)|\label{eq:step-one}\\
    &\leq L_\softmaxshort\|\vx(N{+}1) - \vy(N{+}1))\|_\infty + NL_\softmaxshort\|\mX-\mY\|_\infty \label{eq:IH}\\
    &\leq L_\softmaxshort\|\mX-\mY\|_\infty + NL_\softmaxshort\|\mX-\mY\|_\infty \label{eq:bound-matrix}\\
    &= (N+1)L_\softmaxshort\|\mX-\mY\|_\infty,  
    \end{align}
    \end{subequations}
    where in \cref{eq:bound-one} we use the fact that $\|\softmaxshort(\cdot)\|_\infty \leq 1$, and hence also $f_N(\mX)\leq 1$ for all $N \in \N$.
    In \cref{eq:inverse-triangle}, we use the inverse triangle inequality, in \cref{eq:step-one} we use the result from Step 1, and in \cref{eq:IH}, we use the induction hypothesis. 
    Finally, in \cref{eq:bound-matrix}, we use the fact that the infinity norm of a matrix bounds the infinity norm of a subset of that matrix.
    \end{itemize}
Thus, for $\str \in \kleene{\alphabet}$, we have $\fstr$ is Lipschitz continuous with Lipschitz constant $L_\str = |\strlen{+}1| L_\softmaxshort = |\strlen{+}1| C\invtemp = \bigO{|\str|}$.\looseness=-1
\end{proof}

\aernnApproximateSimulation*
\begin{proof}
    Let $\automaton$ be a \pfsaAcr{} with induced LM $\pLMA$, and be the LM induced by the  $\automaton$ and $\epsilon > 0$. 
    By \cref{thm:almost-there}, we can construct an $\eta$-perturbed \pfsaAcr{} $\automatoneta$ inducing the full-support LM $\pLMeta$ such that $\tvd{\pLMA}{\pLMeta}<\epsilon$.
    Using the same construction as that in the proof of \cref{thm:softmax}, except that we replace $\overline{\log}$ in \cref{eq:aernn-logprobs} with $\log$, we can construct a non-linear output Elman LM $\pLMR$ (the Elman RNN identical to the one in \cref{thm:construction}) that is \emph{weakly equivalent} to the full-support $\pLMeta$.
    This is because, since $\pLMeta$ has full support, all outputs of $\mlp$ are positive, and for any $x>0, \log(x)=\overline{\log}(x)$.
    We then have that 
    \begin{align}
        \tvd{\pLMA}{\pLMR}
        \leq \underbrace{\tvd{\pLMA}{\pLMeta}}_{< \eps} + \underbrace{\tvd{\pLMeta}{\pLMR}}_{=0} < \eps,
    \end{align}
    which finishes the proof.
\end{proof}

\section{Proof of Approximate Simulation by Softmax-normalized, Deep Output Elman LMs}\label{app:dornn-approximation-proof}

\dornnApproximateSimulation*
\begin{proof}
Fix $\epsilon > 0$ and let $\pLMdelta$ be the language model induced from a $\delta$-perturbation of a \pfsaAcr{} $\automaton$ weakly equivalent to $\pLMA$; we will specify $\delta$ at a later point in the proof.
Our goal is to show that there exists a $\ReLU$-activated, softmax-normalized \dornnAcr{} $\pLMR$ such that $\tvd{\pLMA}{\pLMR} = \frac{1}{2}\sum_{\str\in \kleene{\alphabet}}|\pLMA(\str) - \pLMR(\str)|$.
The RNN $\rnn$ will be the same as the one in \cref{thm:construction}; we only seek to approximate the \emph{output} function of the Elman LM.
To do so, we first partition $\tvd{\pLMA}{\pLMR}$ into four additive terms and bound each term individually:
\begin{subequations}
\begin{align}
    &\tvd{\pLMA}{\pLMR} = \frac{1}{2}\sum_{\str\in \kleene{\alphabet}}|\pLMA(\str) - \pLMR(\str)| \\
&= \frac{1}{2}\sum_{\str\in \kleene{\alphabet}}|\pLMA(\str) - \pLMdelta(\str) + \pLMdelta(\str) - \pLMR(\str)| \\
&\leq \frac{1}{2}\sum_{\str\in \kleene{\alphabet}}|\pLMA(\str) - \pLMdelta(\str)| + \frac{1}{2}\sum_{\str\in \kleene{\alphabet}}|\pLMdelta(\str) - \pLMR(\str)| \\
&= \frac{1}{2}\sum_{\str\in \kleene{\alphabet}}|\pLMA(\str) - \pLMdelta(\str)| + \frac{1}{2}\sum_{\str\in S}|\pLMdelta(\str) - \pLMR(\str)| + \frac{1}{2}\sum_{\str\in \kleene{\alphabet} \setminus S}|\pLMdelta(\str) - \pLMR(\str)| \\
&\leq \underbrace{\frac{1}{2}\sum_{\str\in \kleene{\alphabet}}|\pLMA(\str) - \pLMdelta(\str)|}_{< \frac{\epsilon}{4} \text{ by Step 1}} + \underbrace{\frac{1}{2}\sum_{\str\in S}|\pLMdelta(\str) - \pLMR(\str)|}_{< \frac{\epsilon}{4} \text{ by Step 2}} + \underbrace{\frac{1}{2}\sum_{\str\in \kleene{\alphabet} \setminus S}|\pLMdelta(\str)|}_{< \frac{\epsilon}{4} \text{ by Step 3}} + \underbrace{\frac{1}{2}\sum_{\str\in \kleene{\alphabet} \setminus S}|\pLMR(\str)|}_{< \frac{\epsilon}{4} \text{ by Step 4}}\nonumber  \\
&< \epsilon.
\end{align}
\end{subequations}
The bounds are shown in the corresponding steps below. \\

\noindent \emph{Caution}: Steps 1, 2, 3, and 4 all depend on $\delta$, and Steps 2, 3, and 4 all depend on $S$.
To decouple the individual steps (for expository purposes), in each step, we will choose a separate $\delta_i$ and then assign $\delta = \min(\delta_1, \delta_2, \delta_3, \delta_4)$ and a separate set $S_i$
and then assign $S = S_2 \cup S_3 \cup S_4$.

\paragraph{Step 1.}
By \Cref{thm:almost-there}, we can choose a $\delta_1$ such that $\frac{1}{2}\sum_{\str\in \kleene{\alphabet}}|\pLMA(\str) - \pLMdeltaN{1}(\str)| < \frac{\epsilon}{4}$.

\paragraph{Step 2.}
Let $S_2$ be \emph{any} finite subset of $\kleene{\alphabet}$ and choose $\delta_2 \in (0, 1)$ arbitrarily. 
We first note that the $\delta_2$-perturbed language model $\pLMdeltaN{2}$ can be written as follows
\begin{subequations}
\begin{align}
\pLMdeltaN{2}(\str) &= \pLMdeltaN{2}(\eos \mid \str) \prod_{t=1}^T \pLMdeltaN{2}(y_t \mid \str_{<t})  \\
&= \softmaxshort(\log(\outMtx\hiddState(\str)))_{\eos} \prod_{t=1}^T \softmaxshort(\log(\outMtx \hiddState(\str_{<t})))_{y_t}.
\end{align}
\end{subequations}
Moreover, since $\pLMdeltaN{2}$ is induced from an $\delta_2$-perturbed \pfsaAcr{}, $\hiddState(\str)_q > 0$ for all $q \in Q$ and all $\str \in \kleene{\alphabet}$.
Next, define the following constants
\begin{subequations}
\begin{align}
\xi_1 &\defeq \min_{\str \in S_2} \min_{q \in Q} \hiddState(\str)_{q} > 0 \label{eq:xione} \\
\xi_2 &\defeq \min_{\str \in S_2} \min_{\eossym \in \eosalphabet} \log (\outMtx \hiddState(\str))_{\eossym} < 0
\end{align}
\end{subequations} 
Finally, we will define
\begin{equation}
    L = \max_{\str \in S_2} L_{\str}
\end{equation}
where $L_{\str}$ is the Lipschitz constant of the function $\fstr$ (cf. \cref{def:f-lm}) given in \Cref{lemma:liptschitz}.
Our goal is to find a $\ReLU$ MLP $\mlp$ that is an arbitrarily good approximator of $\hiddState \mapsto \log(\outMtx \hiddState) \colon [\xi_1, 1]^{|\states|} \rightarrow [\xi_2, 0]^{\eosnsymbols}$ where the $\log$ is applied element-wise.
Because $[\xi_1, 1]^{|\states|} \ni \hiddState \mapsto \log(\outMtx \hiddState)$ is continuous and its domain is compact, \cref{thm:pinkus} tells us there does exist  a one-layer MLP $\mlp_1$ such that the following bound holds
\begin{equation}\label{eq:pinkus-bound}
    \max_{\hiddState \in [\xi_1, 1]^{|\states|}} ||\log(\outMtx\hiddState) - \mlp_1(\hiddState)||_{\infty} < \frac{\epsilon}{2 L |S|}
\end{equation}
In order to make sure that the one-layer MLP $\mlp$ only receives inputs in its domain $[\xi_1, 1]^{|\states|}$, we compose $\mlp_1$ with another $\ReLU$ layer $\mlp_2$ defined element-wise as follows
\begin{equation}
  \mlp_2(\hiddState)_i \defeq \ReLU(0, \hiddState_i - \xi_1) + \xi_1
\end{equation}
which computes the function $\max(\xi_1, x)$.
Because of the definition of $\xi_i$ in \Cref{eq:xione}, 
the composition of $\mlp_1$ with $\mlp_2$ has no effect on arguments taken from the set $\{ \hiddState(\str) \mid \str \in S_2\}$.
Now, define $\mlp(\hiddState) \defeq \mlp_1(\mlp_2(\hiddState))$.
Then, making use of the finiteness of $S_2$, for any $\str \in S_2$, we have that $\fstr$ is $L_{\str}$-Lipschitz continuous and a fortiori $L$-Lipschitz continuous by \Cref{lem:continuity}.
Combining the result with the \Cref{eq:pinkus-bound}, we have that
\begin{align}
    |\fstr(\log \outMtx\hiddState(\str)) - \fstr(F(\hiddState(\str)))| &< \frac{\epsilon}{2 L |S_2|} L = \frac{\epsilon}{2|S_2|} 
\end{align}
which, with no more than an adjustment of notation (see \Cref{def:f-lm}), tells us that
\begin{equation}
|\pLMdeltaN{2}(\str) - \pLMR(\str)| < \frac{\epsilon}{2 L |S_2|} L = \frac{\epsilon}{2|S_2|}
\end{equation}
Because we chose a finite set $S_2$ of strings, we arrive at the following bound
\begin{equation}
    \frac{1}{2} \sum_{\str \in S_2} |\pLMdeltaN{2}(\str) - \pLMR(\str)| <    \frac{1}{2}\sum_{\str \in S_2} \frac{\epsilon}{|S_2|} = \frac{1}{2} |S_2| \frac{\epsilon}{2|S_2|} = \frac{\epsilon}{4}
\end{equation}
as desired.

\paragraph{Step 3.}
Again by \Cref{thm:almost-there}, we can choose $\delta_3$ such that $\frac{1}{2}\sum_{\str\in \kleene{\alphabet}}|\pLMA(\str) - \pLMdeltaN{3}(\str)| < \frac{\epsilon}{8}$.
The proof of \Cref{thm:almost-there} yields a set $S_3$ such that $\pLMA(S_3) = 1 - \frac{\epsilon}{16}$.
Then, we can apply \Cref{claim:perturbed-tail}, which says $\frac{\epsilon}{2} > \pLMdeltaN{3}(\kleene{\alphabet} \setminus S_3)$.

\paragraph{Step 4.}
By Step 2, for any finite set $S_4\subset \kleene{\alphabet}$, we can choose a $\delta_4>0$ such that $\frac{\epsilon}{8} > \frac{1}{2}\sum_{\str\in S_4}\left|\pLMdeltaN{4}(\str) - \pLMR(\str)\right|$.
Then, consider the following chain of inequalities 
\begin{subequations}
\begin{align}
\frac{\epsilon}{8} &>\frac{1}{2}\sum_{\str\in S_4}\left|\pLMdeltaN{4}(\str) - \pLMR(\str)\right| \\
&\geq \frac{1}{2}\left|\sum_{\str\in S_4}\left(\pLMdeltaN{4}(\str) - \pLMR(\str)\right)\right| \\
&=\frac{1}{2}\left|\pLMdeltaN{4}(S_4)-\pLMR(S_4))\right| \\
&=\frac{1}{2}\left|1 - \pLMdeltaN{4}(\kleene{\alphabet}\setminus S_4)- (1 - \pLMR(\kleene{\alphabet}\setminus S_4))\right|\\ &=\frac{1}{2}\left| \pLMdeltaN{4}(\kleene{\alphabet}\setminus S_4) - \pLMR(\kleene{\alphabet}\setminus S_4)\right|.
\end{align}
\end{subequations}
This results in
 $\frac{\epsilon}{4} > \left| \pLMdeltaN{4}(\kleene{\alphabet}\setminus S_4) - \pLMR(\kleene{\alphabet}\setminus S_4)\right|$.\\
So far, we only required $S_4$ to be finite but did not fix its value.
Now, by \cref{claim:perturbed-tail}, we can choose $S_4$ such that we have $\frac{\epsilon}{4} > \left| \pLMdeltaN{4}(\kleene{\alphabet}\setminus S_4) \right|$.
Summing both of these together, we arrive at 
\begin{subequations}
\begin{align}
    \frac{\epsilon}{2} &> \left| \pLMdeltaN{4}(\kleene{\alphabet}\setminus S_4) - \pLMR(\kleene{\alphabet}\setminus S_4)\right| + \left| \pLMdeltaN{4}(\kleene{\alphabet}\setminus S_4) \right| \\
    &\geq \left| \pLMdeltaN{4}(\kleene{\alphabet}\setminus S_4) - \pLMR(\kleene{\alphabet}\setminus S_4) + \pLMdeltaN{4}(\kleene{\alphabet}\setminus S_4) \right| = \left| \pLMR(\kleene{\alphabet}\setminus S_4)\right|.
\end{align}
\end{subequations}
Thus, we have $\frac{\epsilon}{2} > \pLMR(\kleene{\alphabet}\setminus S_4)$ because probabilities are non-negative.
\end{proof}

\end{document}